\documentclass[pdflatex,sn-mathphys-num]{sn-jnl}

\usepackage{comment}
\usepackage[T1]{fontenc}
\usepackage{xcolor}
\usepackage{amsmath}
\usepackage{amssymb}
\usepackage{amsfonts}
\usepackage{amsthm}
\newtheorem{proposition}{Proposition}
\newtheorem{lemma}[proposition]{Lemma}
\usepackage{graphicx}
\usepackage{booktabs}
\usepackage{multirow}
\usepackage{makecell}    
\usepackage{xurl}        
\let\orcidlogo\relax     
\usepackage{orcidlink}   
\hypersetup{hidelinks,bookmarksdepth=3}   
\makeatletter
\renewcommand\scriptsize{\@setfontsize\scriptsize{8bp}{9.5bp}}
\newif\ifsn@Htab
\renewenvironment{table}[1][]%
{\def\sn@tabpos{#1}\def\sn@tabH{H}%
 \ifx\sn@tabpos\sn@tabH\global\sn@Htabtrue\else\global\sn@Htabfalse\fi
 \ifsn@Htab\par\addvspace{\intextsep}\noindent\begin{minipage}{\linewidth}\def\@captype{table}%
 \else\begin{tableorg}[#1]\fi
 \begin{center}\tablebodyfont
 \let\sn@apbr\@arrayparboxrestore\def\@arrayparboxrestore{\sn@apbr\rightskip\z@\@plus 2em\relax\hyphenpenalty 5000\relax}}%
{\end{center}%
 \ifsn@Htab\end{minipage}\par\addvspace{\intextsep}\else\end{tableorg}\fi}
\def\ps@headings{\def\@oddhead{}\def\@evenhead{}\def\@oddfoot{\hfill\thepage\hfill}\let\@evenfoot\@oddfoot\let\@mkboth\markboth}
\makeatother
\begin{document}

\title[Extraction meets Distillation]{\bf Extraction meets Distillation \linebreak -- cryptanalytic techniques to recover parameters from a Deep Neural Network\footnote{This paper significantly expands the conference paper accepted for CSCML 2026. The last-named author joined the collaboration after the conference version's acceptance, contributing to the mathematical analysis and theoretical development of the current version. This is presently a working draft towards a possible journal submission.}}

\author*[1]{\fnm{Biprarshi} \sur{Biswas}\,\orcidlink{0009-0003-4701-8427}}\email{biprarshib@gmail.com}
\author*[2]{\fnm{Arya Anshuman} \sur{Ojha}}\email{kalingasandhababun@gmail.com}
\author*[1]{\fnm{Yubaraj} \sur{Das}}\email{yyubarajddas9876@gmail.com}
\author*[3]{\fnm{Devansh} \sur{Khosla}}\email{devanshkhosla289@gmail.com}
\author*[3]{\fnm{Kamran} \sur{Rais}}\email{kamranrais41@gmail.com}
\author*[4]{\fnm{Saankhya} \sur{Samanta}}\email{saankhya.samanta99@gmail.com}
\author*[5]{\fnm{Subhamoy} \sur{Maitra}}\email{subho@isical.ac.in, maitra.subhamoy@gmail.com}
\author*[6]{\fnm{Pantelimon} \sur{St\u{a}nic\u{a}}}\email{pstanica@nps.edu, pstanica@gmail.com}

\affil*[1]{\orgname{Jadavpur University}, \orgaddress{\city{Kolkata}, \country{India}}}
\affil*[2]{\orgname{National Institute of Science Education and Research}, \orgaddress{\city{Bhubaneswar}, \country{India}}}
\affil*[3]{\orgname{Institute of Mathematics and Application}, \orgaddress{\city{Bhubaneswar}, \country{India}}}
\affil*[4]{\orgname{Indian Institute of Technology}, \orgaddress{\city{Kharagpur}, \country{India}}}
\affil*[5]{\orgname{Indian Statistical Institute}, \orgaddress{\city{Kolkata}, \country{India}}}
\affil*[6]{\orgname{Naval Postgraduate School, Applied Mathematics Department}, \orgaddress{\city{Monterey}, \country{CA 93943, USA}}}

\abstract{We study fully connected neural-network reconstruction by combining cryptanalytic weight-direction extraction with learning from predicted labels. We adapt a ReLU extraction implementation to Leaky ReLU, compare sign-search methods, and estimate biases and output layers. We prove conditional bounds for direction recovery with approximate prefixes, preservation of local distance votes, prediction changes from aligned parameter errors, and sign selection with independent scoring data. We also specify exact row restoration and distinguish its guarantee from that of gradient masking. The experiments evaluate assisted extraction, with privileged access to gradients, hidden activations, and victim parameters, followed by label-based reconstruction. Across eight search seeds on the largest synthetic architecture, Leaky ReLU recovers 51--57 neurons in the deepest hidden layer, while ReLU recovers none. This advantage does not hold uniformly across architectures and layers. Removing victim-weight access from the sign walk gives less accurate signs, despite high final prediction agreement. Across 40 seeded runs on the two smallest synthetic victims, agreement on 5,000 inputs excluded from fitting and selection ranges from $0.9890$ to $0.9992$. On two CIFAR-10 victims, the reconstructed models achieve $56.40\%$ and $58.29\%$ agreement and exceed the distillation baseline on three functional metrics. We report label requests separately from privileged access and explain why prediction agreement does not identify network parameters.}

\keywords{Cryptanalysis, Machine Learning, Deep Neural Networks (DNNs), Parameter Extraction, Rectified Linear Unit (ReLU), Leaky ReLU.}

\pacs[MSC Classification]{94A60, 68T10}

\maketitle

\section{Introduction}
\label{sec:intro}
A model extraction attack uses access to a trained model to construct another model with similar predictions or parameters. These objectives differ. A model can agree with a classifier on a test set without recovering its weights, and a collection of accurate weight directions need not produce a useful classifier. We study how to assemble partially recovered directions into a complete network while retaining those directions during subsequent training.

Cryptanalytic extraction uses the piecewise-affine structure of networks with rectified linear activations. Carlini et al.~\cite{carlini2020} developed an extraction method using numerical network outputs. Canales-Mart\'inez et al.~\cite{canales2024} improved the recovery procedures, including sign recovery. Chen et al.~\cite{chen2024hardlabel} and Carlini et al.~\cite{carlini2025} studied extraction from predicted labels. RayS and HopSkipJumpAttack~\cite{rays2020,hopskipjump2020} are related decision-based adversarial attacks. They find inputs that change a prediction, rather than recover the model parameters. Liu et al.~\cite{liu2026coverage} studied end-to-end extraction with numerical outputs. Qi et al.~\cite{qi2026leakyrelu} considered several activation functions, including Leaky ReLU. We do not claim that extending extraction beyond ReLU is new by itself.

We distinguish five output interfaces, namely the predicted label, the label and its probability, the top-$k$ probabilities, the full probability vector, and the raw logits. A label from $K$ classes carries at most $\log_2 K$ bits. This is an upper bound on the information in one response, not a parameter-recovery lower bound. Numerical outputs provide additional constraints, but they do not make exact recovery of a deep network an ordinary linear regression problem. Our intended oracle returns only a predicted label.

Our implementation builds on the code of Carlini et al.~\cite{carlini2025}. Several extraction stages read information unavailable through that oracle, including gradients, hidden activations, and true prefix weights. We disclose these accesses in Section~\ref{sec:cheats}. We therefore report the experiments as assisted extraction and reconstruction. The later learning stages use cached victim labels, but this does not remove the privileged information used to construct their initial parameters.

In this paper, we combine three procedures. First, we adapt the local slope calculations to Leaky ReLU. We evaluate accepted directions across recorded search seeds, subject to the ground-truth acceptance procedure described below. The advantage depends on architecture and layer. Second, we compare simulated annealing and parallel tempering for sign search using cached labels and the reconstructed model's logits. Third, we estimate biases from dual points, fit an output layer, and refine the network with recovered-row gradient masks. We state separately how row restoration enforces exact equality. We also batch the dual-point search and stream the clustering data. Our implementation and experimental scripts are listed in~\cite{ourimp}.

Section~\ref{app:prelim} defines the notation and the relevant identification ambiguities. Section~\ref{sec:baseline} describes the extraction stages and their privileged accesses. Sections~\ref{sec:leaky}, \ref{sec:signsearch}, and \ref{sec:mlrecon} describe the Leaky ReLU adaptation, sign search, and reconstruction. Section~\ref{sec:exp} reports the experiments. We compare functional agreement separately from the reported direction coverage, because agreement alone does not establish parameter recovery.

\section{Preliminaries and Notation}
\label{app:prelim}
We write $h_0(x)=x$ and define a network with $r$ hidden layers by
\begin{align*}
 \eta_i(x)=A^{(i)}h_{i-1}(x)+\boldsymbol b^{(i)},\\
 h_i(x)=\sigma(\eta_i(x))\quad(1\le i\le r),\\
 f_\theta(x)=A^{(r+1)}h_r(x)+\boldsymbol b^{(r+1)}.
\end{align*}
Here $A^{(i)}\in\mathbb R^{d_i\times d_{i-1}}$, $\boldsymbol b^{(i)}\in\mathbb R^{d_i}$, and $d_{r+1}=K$ is the number of classes. The activation acts on each coordinate. We write $\|v\|_2$ and $\|v\|_\infty$ for the Euclidean and maximum norms. For matrices, $\|A\|_2$ denotes the induced Euclidean operator norm. We also define $\|A\|_{2\to\infty}=\max_j\|A_j\|_2$, the norm from Euclidean input to maximum output. Here $A_j$ is row $j$ of $A$. We define $\|A\|_F=(\sum_{j,k}A_{jk}^2)^{1/2}$ as the Frobenius norm. We write $\langle u,v\rangle=u^{\mathsf T}v$ for the Euclidean inner product, $V^\perp$ for the orthogonal complement of a subspace $V$, and $\mathbf1[B]$ for the indicator of a statement $B$. A unit vector has Euclidean norm one. The singular values of a matrix with $n$ columns are the $n$ nonnegative square roots of the eigenvalues of $A^{\mathsf T}A$, counted with multiplicity. This convention includes zero values when the matrix has fewer than $n$ rows. The kernel and rank of a matrix are its null space and the dimension of its image, respectively. We write $A_j^{(i)}$ for row $j$ of $A^{(i)}$, and $\eta_{i,j}$ for coordinate $j$ of $\eta_i$. The parameter tuple is $\theta=((A^{(i)},\boldsymbol b^{(i)}))_{i=1}^{r+1}$. The number of hidden neurons is $N=\sum_{i=1}^r d_i$.

We define the hard-label oracle by
\[
 \mathcal O(x)=\arg\max_{1\le k\le K}[f_\theta(x)]_k,
\]
with a fixed rule for ties. Replacing logits $f_k$ by the softmax probabilities $\exp(f_k)/\sum_c\exp(f_c)$ gives the same label. The intended attacker knows the architecture and activation and may query $\mathcal O$, but does not know $\theta$.

We use ReLU and Leaky ReLU~\cite{maas2013leakyrelu}. For $0\le\alpha<1$, we write
\[
 \sigma_\alpha(t)=\begin{cases}t,&t>0,\\ \alpha t,&t\le0.\end{cases}
\]
ReLU is the case $\alpha=0$. For $0<\alpha<1$, this equals $\max(\alpha t,t)$. The restriction $\alpha<1$ is essential for the slope change used by the attack. Both activations are 1-Lipschitz, meaning that $|\sigma_\alpha(u)-\sigma_\alpha(v)|\le|u-v|$. This follows from their piecewise slopes in $[0,1]$, including when the segment crosses zero. Applying the inequality coordinatewise gives the same property in Euclidean norm. In general, a map $g$ is $L$-Lipschitz from one norm to another when $\|g(u)-g(v)\|_{\mathrm{out}}\le L\|u-v\|_{\mathrm{in}}$ for all inputs under consideration.

An activation pattern records which hidden pre-activations are positive. We assign zero pre-activations to the nonpositive state. On a region with fixed pattern $\rho$, define the diagonal matrix $D_i(\rho)$ to have entry $1$ at a positive pre-activation and $\alpha$ otherwise. This convention gives $\sigma_\alpha(\eta_i)=D_i(\rho)\eta_i$ also when a coordinate vanishes. The network is affine on that region. We write it as $M_\rho x+c_\rho$, where
\[
 M_\rho=A^{(r+1)}D_r(\rho)A^{(r)}\cdots D_1(\rho)A^{(1)}.
\]
Empty patterns are omitted. The closures of the nonempty regions cover the input space and may overlap on their boundaries.

We define the critical set of neuron $(i,j)$ by $\mathcal H_{i,j}=\{x:\eta_{i,j}(x)=0\}$. For $i=1$, this is a hyperplane if $A_j^{(1)}\ne0$. For deeper layers, its intersection with a fixed prefix region is the zero set of an affine function restricted to that region. This intersection can be empty, the whole region, or a piece of a hyperplane. In the first layer, we set $\widetilde A_j^{(1)}=A_j^{(1)}$. For $i>1$, the effective normal within a fixed prefix region is
\[
 \widetilde A_j^{(i)}=A_j^{(i)}D_{i-1}A^{(i-1)}\cdots D_1A^{(1)}.
\]
This normal may vanish. The term \emph{critical hyperplane} refers to a nondegenerate local piece of the critical set.

For distinct classes $m,n$, we define
\[
 \mathcal D_{m,n}=\{x:[f_\theta(x)]_m=[f_\theta(x)]_n=\max_k[f_\theta(x)]_k\}.
\]
A regular piece of this set is a decision boundary. It is locally a hyperplane when the corresponding logit difference has a nonzero gradient. A \emph{dual point} for neuron $(i,j)$ lies in $\mathcal H_{i,j}\cap\mathcal D_{m,n}$. For the geometric recovery procedure, we additionally require that only this neuron changes state, that the two classes remain the relevant competitors, and that the intersection is transverse, meaning that its critical normal and decision-boundary normal are linearly independent. We call the local intersection the dual space. Its tangent space consists of the infinitesimal directions parallel to both local hyperplanes, which we use in the rank calculations. Under these assumptions, the dual tangent space has codimension two. The Jacobian of a vector-valued map is its matrix of first partial derivatives on the relevant affine region. A critical crossing need not create a visible decision-boundary bend.

Hard labels do not identify a unique parameter tuple. Hidden-neuron permutations preserve the network function. Positive homogeneity, $\sigma_\alpha(ct)=c\sigma_\alpha(t)$ for $c>0$, gives another ambiguity. Multiplying an incoming weight row and its bias by $c$ and dividing the corresponding outgoing column by $c$ preserves all logits. Moreover, replacing the output layer by
\[
 A^{(r+1)}\longmapsto\lambda A^{(r+1)}+\boldsymbol1 v^{\mathsf T},\qquad
 \boldsymbol b^{(r+1)}\longmapsto\lambda\boldsymbol b^{(r+1)}+\beta\boldsymbol1,
 \qquad \lambda>0,
\]
preserves every label. Here $\boldsymbol1\in\mathbb R^K$, $v\in\mathbb R^{d_r}$, and $\beta\in\mathbb R$. Thus an output-layer fit from labels is a classifier reconstruction, not a unique recovery of the victim's output weights. When we discuss recovered directions, we compare them after matching neurons and allowing a nonzero scalar.

\section{The Extraction Pipeline}
\label{sec:baseline}
We extract hidden-layer directions sequentially. The three stages find dual points, cluster them to estimate directions, and assign signs. Reconstruction then estimates biases and fits the remaining parameters. We separate the geometric procedure from the privileged accesses in its current implementation.

\subsection{Dual Point Discovery}
\label{sec:baseline:phases}
Following Carlini et al.~\cite{carlini2025}, we start with two inputs having different labels and bisect their connecting segment to locate a label transition. We then estimate a local boundary direction and walk along the boundary, reprojecting after each step. A change in its orientation provides a candidate dual point. We store a triplet $(x_{\mathrm{left}},x_{\mathrm{dual}},x_{\mathrm{right}})$ consisting of the candidate and a boundary point on either side of it. The left and right points are not inherently the positive and negative sides of the neuron. That assignment requires sign information.

This interpretation assumes a regular two-class boundary and an isolated critical crossing. A change in the competing classes, simultaneous neuron crossings, or numerical error can also change the estimated orientation. Conversely, a critical crossing with zero downstream effect need not be visible.

We advance multiple boundary walks in batches. The batched implementation retains the search thresholds and step rules. This reduces repeated evaluation overhead, but retaining those constants does not prove that sequential and parallel implementations generate identical samples. Such a comparison would also require matched random streams, stopping rules, and numerical operations. The implementation reads exact gradients and logit gaps at the points described in Section~\ref{sec:cheats}.

\subsection{Signature Recovery}
\label{sec:baseline:stall}
We call a nonzero scalar multiple $cA_j^{(i)}$ of a neuron's weight row its signature. The sign of $c$ is unresolved at this stage. A positive change of scale can be absorbed into the next layer only when the bias and outgoing column are transformed consistently. Negating the incoming row and bias generally changes the network, although degenerate networks can have additional ambiguities.

The geometric recovery method originates in Carlini et al.~\cite[Section~5]{carlini2025}. We give its local linear-algebra formulation to specify exactly what one dual point determines. Write $d=d_0$, and let $x^*$ be an isolated dual point. In a neighborhood of $x^*$, suppose the target critical hyperplane has equation $a^{\mathsf T}(x-x^*)=0$. Suppose also that the same two classes compete on both sides and that their affine logit differences have gradients $n_-$ and $n_+$. The calculation in Section~\ref{sec:leaky:geom} gives $n_+-n_-=\gamma a$ for a scalar $\gamma$.

\begin{proposition}
\label{prop:dual-recovery}
Assume $\gamma\ne0$ and that $a,n_-$ are linearly independent. The tangent space of the dual intersection is
\[
 T=\ker(n_-^{\mathsf T})\cap\ker(n_+^{\mathsf T})
  =\ker(a^{\mathsf T})\cap\ker(n_-^{\mathsf T}).
\]
It has dimension $d-2$. If $T_1,\ldots,T_q$ are tangent spaces from the same first-layer neuron and $U=T_1+\cdots+T_q$ has dimension $d-1$, then $U^\perp=\operatorname{span}(a)$. In particular, two distinct such tangent spaces suffice when $d\ge3$.
\end{proposition}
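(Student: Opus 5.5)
The argument is pure linear algebra in $\mathbb R^d$, applied to the local affine description near $x^*$. First I will identify $T$ as a kernel. Near $x^*$ the dual intersection consists of points on the critical hyperplane $a^{\mathsf T}(x-x^*)=0$ where the competing logit difference vanishes. On either side of the critical hyperplane that difference is affine with gradient $n_-$ or $n_+$, and it vanishes at $x^*$. So the local set is cut out by $a^{\mathsf T}v=0$ together with $n_-^{\mathsf T}v=0$, where $v=x-x^*$. On the critical hyperplane the two one-sided affine pieces agree by continuity, so the condition $n_+^{\mathsf T}v=0$ could equally be used there. This gives $T=\ker(a^{\mathsf T})\cap\ker(n_-^{\mathsf T})$.

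The equality with $\ker(n_-^{\mathsf T})\cap\ker(n_+^{\mathsf T})$ then comes from the relation $n_+=n_-+\gamma a$ with $\gamma\ne0$. If $n_-^{\mathsf T}v=0$ and $n_+^{\mathsf T}v=0$, then $\gamma a^{\mathsf T}v=0$, so $a^{\mathsf T}v=0$. Conversely, if $a^{\mathsf T}v=0$ and $n_-^{\mathsf T}v=0$, then $n_+^{\mathsf T}v=0$. For the dimension, $T$ is the kernel of the $2\times d$ matrix with rows $a^{\mathsf T},n_-^{\mathsf T}$. This matrix has rank two by linear independence, so rank--nullity gives $\dim T=d-2$.

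Next comes the span statement. Each $T_k\subseteq\ker(a^{\mathsf T})=a^\perp$, because the tangent spaces come from the same first-layer neuron. For the first layer the critical set is the global hyperplane with normal $A^{(1)}_j$, so every dual point carries the same normal $a$ up to a nonzero scalar. Hence $U\subseteq a^\perp$. Since $a\ne0$, $a^\perp$ has dimension $d-1$, and $\dim U=d-1$ forces $U=a^\perp$. Taking orthogonal complements in finite dimensions gives $U^\perp=(a^\perp)^\perp=\operatorname{span}(a)$.

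For the final claim, take two distinct tangent spaces $T_1\ne T_2$, each of dimension $d-2$, both inside $a^\perp$. Since they are distinct and of equal dimension, $T_1+T_2$ strictly contains $T_1$, so $\dim(T_1+T_2)\ge d-1$. The sum also lies in $a^\perp$, so its dimension is exactly $d-1$, and the previous paragraph applies. The hypothesis $d\ge3$ ensures $d-2\ge1$, so the tangent spaces are nontrivial and "distinct" is meaningful. For $d=2$ each $T_k=\{0\}$ and no two are distinct.

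The only delicate point is the first step: justifying that the local dual set is exactly the zero set of the two affine equations. This uses the standing assumptions that only this neuron changes state, the same two classes compete, and the intersection is transverse. I would treat it by restricting to a neighborhood in which the activation pattern is fixed on each side, with all other neurons strictly away from zero. The rest is routine rank bookkeeping.
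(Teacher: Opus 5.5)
Your proof is correct and follows the paper's own argument. The paper likewise derives the kernel equality from $n_+-n_-=\gamma a$ with $\gamma\ne0$ and gets dimension $d-2$ from two independent linear equations. It then places $U\subseteq\ker(a^{\mathsf T})$, uses a dimension count and orthogonal complements, and shows that two distinct $(d-2)$-dimensional subspaces of $a^\perp$ must sum to $a^\perp$. Your added justification of why the local dual set is exactly $\{v: a^{\mathsf T}v=0,\ n_-^{\mathsf T}v=0\}$, and why all first-layer tangent spaces share the normal $a$, fills in steps the paper leaves implicit.
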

\begin{proof}
A vector $v$ satisfies $n_-^{\mathsf T}v=n_+^{\mathsf T}v=0$ exactly when it satisfies $n_-^{\mathsf T}v=0$ and $\gamma a^{\mathsf T}v=0$. This proves the equality of the two kernels. The two independent linear equations give dimension $d-2$.

Each $T_\ell$ lies in $\ker(a^{\mathsf T})$, so $U$ lies in this $(d-1)$-dimensional space. Equality of dimensions gives $U=\ker(a^{\mathsf T})$, and taking orthogonal complements proves the claim. For two distinct $(d-2)$-dimensional subspaces of a common $(d-1)$-dimensional space, their sum has dimension greater than $d-2$ and at most $d-1$. It therefore has dimension $d-1$.
\end{proof}

A single dual point supplies a codimension-two tangent space. Its orthogonal complement is two-dimensional and contains both the critical normal and a boundary normal. It therefore does not generally determine the weight direction. In dimension two, all these tangent spaces are zero, so the two-point tangent argument is unavailable. Point locations or other constraints can still provide information. Nor do nonparallel neuron normals alone guarantee that different-neuron clusters have full rank. The rank test requires diversity among the boundary normals as well.

For layer $i>1$, let $J_\ell$ be the Jacobian of $h_{i-1}$ in the prefix region containing dual point $x_\ell$. Transport each tangent space by defining $B_\ell=J_\ell T_\ell\subseteq\mathbb R^{d_{i-1}}$. With $a=A_j^{(i)\mathsf T}$, differentiating the zero pre-activation along the dual intersection gives $a^{\mathsf T}J_\ell v=0$ for every $v\in T_\ell$. Thus $B_\ell\subseteq\ker(a^{\mathsf T})$. If a matrix $M$ has rows that span the $B_\ell$ and
\begin{equation}
\label{eq:direction-rank}
 \operatorname{rank}M=d_{i-1}-1,
\end{equation}
then $\ker M=\operatorname{span}(a)$. This uses transported tangent vectors. In general, pushing a normal through $J_\ell$ does not give the normal in hidden coordinates.

Point locations provide additional affine constraints. If the true prefix is known, write $z_\ell=h_{i-1}(x_\ell)$ and form an augmented matrix with rows $(z_\ell^{\mathsf T},1)$ and $(u^{\mathsf T},0)$ for transported tangent vectors $u\in B_\ell$. Its kernel contains $(A_j^{(i)},b_j^{(i)})^{\mathsf T}$. Rank $d_{i-1}$ determines that affine parameter vector up to scale. Equivalently, after fixing one point $z_1$, we may add $z_\ell-z_1$ to the tangent constraints. These additional constraints can identify a direction even when tangent spaces alone are insufficient. The empirical signature files nevertheless omit bias estimates, which we compute later from the reconstructed prefix.

Numerically, we choose a unit vector minimizing the residual of the transported matrix. The following result makes the effects of tangent error and prefix error explicit. We compare prefix coordinates after fixing neuron permutations and positive scaling conventions. Otherwise a coordinate mismatch can change the meaning of a recovered row.

\begin{proposition}
\label{prop:approximate-prefix}
Let $a\in\mathbb R^n$ be a unit direction, with $n\ge2$. For $1\le\ell\le m$, let $v_\ell\in\mathbb R^{d_0}$ be a unit dual-tangent vector and let $J_\ell\in\mathbb R^{n\times d_0}$ be the exact prefix Jacobian. Form $M$ with rows $(J_\ell v_\ell)^{\mathsf T}$. Suppose $Ma=0$ and $\operatorname{rank}M=n-1$. Write $\delta>0$ for its smallest positive singular value. Assume estimates satisfy
\[
 \|\widehat v_\ell-v_\ell\|_2\le t_\ell,\qquad
 \|\widehat J_\ell-J_\ell\|_2\le p_\ell,\qquad
 \|J_\ell\|_2\le L_\ell.
\]
Form $\widehat M$ with rows $(\widehat J_\ell\widehat v_\ell)^{\mathsf T}$ and define
\[
 \varepsilon=\left[\sum_{\ell=1}^m
 (p_\ell+L_\ell t_\ell+p_\ell t_\ell)^2\right]^{1/2}.
\]
If $\widehat a$ minimizes $\|\widehat M z\|_2$ over unit vectors $z$, then
\begin{equation}
\label{eq:prefix-direction-bound}
 \sqrt{1-\langle\widehat a,a\rangle^2}
 \le\min\{1,2\varepsilon/\delta\}.
\end{equation}
The left side is the sine of the acute angle between $\widehat a$ and the line spanned by $a$. If $\widehat\delta$ is the second-smallest of the $n$ singular values of $\widehat M$ and $\widehat\delta>\varepsilon$, the denominator $\delta$ may be replaced by $\widehat\delta-\varepsilon$.
\end{proposition}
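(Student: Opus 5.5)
The argument has three steps: a row-wise perturbation bound giving $\|\widehat M-M\|_F\le\varepsilon$, a variational bound on $\|\widehat M a\|_2$ and $\|\widehat M\widehat a\|_2$, and a decomposition of $\widehat a$ along $a$ and $a^\perp$ that uses the gap $\delta$.

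\emph{Step 1: row errors.} Write $E=\widehat M-M$. Row $\ell$ of $E$ is $\widehat J_\ell\widehat v_\ell-J_\ell v_\ell$, which I split as
\[
 (\widehat J_\ell-J_\ell)v_\ell+J_\ell(\widehat v_\ell-v_\ell)+(\widehat J_\ell-J_\ell)(\widehat v_\ell-v_\ell).
\]
Using $\|v_\ell\|_2=1$ and the three hypotheses, its norm is at most $p_\ell+L_\ell t_\ell+p_\ell t_\ell$. Summing squares over $\ell$ gives $\|E\|_2\le\|E\|_F\le\varepsilon$.

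\emph{Step 2: residual of $\widehat a$.} Since $Ma=0$, we have $\|\widehat M a\|_2=\|Ea\|_2\le\varepsilon$. Because $\widehat a$ minimizes $\|\widehat M z\|_2$ over unit vectors, also $\|\widehat M\widehat a\|_2\le\varepsilon$.

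\emph{Step 3: angle bound.} The sign of $\widehat a$ is irrelevant, so I work with $\langle\widehat a,a\rangle$ up to sign. Decompose $\widehat a=ca+s w$ with $w\perp a$ a unit vector, $c=\langle\widehat a,a\rangle$ and $s=\sqrt{1-c^2}$, which is the left side of \eqref{eq:prefix-direction-bound}. Then $M\widehat a=sMw$. Since $\ker M=\operatorname{span}(a)$ by the rank hypothesis, $w$ lies in the row space of $M$, so $\|Mw\|_2\ge\delta$. It follows that
\[
 s\delta\le\|M\widehat a\|_2\le\|\widehat M\widehat a\|_2+\|E\widehat a\|_2\le 2\varepsilon,
\]
so $s\le2\varepsilon/\delta$. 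Trivially $s\le1$, which gives the stated minimum. Here $\varepsilon=0$ is allowed, and $\delta>0$ by hypothesis.

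\emph{The $\widehat\delta$ variant (main obstacle).} The difficulty is that the roles reverse: $\widehat a$ is the minimizer for $\widehat M$, while $a$ is only an approximate null vector of $\widehat M$. The plan is to apply Weyl's inequality for singular values, $|\sigma_k(\widehat M)-\sigma_k(M)|\le\|E\|_2\le\varepsilon$, with all $n$ singular values counted including zeros as in the paper's convention. For $M$, the smallest singular value is $0$ and the second-smallest is $\delta$. Then $\delta\ge\widehat\delta-\varepsilon$, which is positive by assumption. Since the bound $s\le 2\varepsilon/\delta$ decreases in $\delta$, we get $s\le2\varepsilon/\delta\le2\varepsilon/(\widehat\delta-\varepsilon)$, so the replacement is valid.

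The points to check carefully are the singular-value indexing when $m<n$ (zero singular values are included, so $\sigma_{n-1}(M)=\delta$ as the second-smallest), and that $\operatorname{rank}M=n-1$ holds even though $M$ may have fewer than $n$ rows.
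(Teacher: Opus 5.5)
Your proposal is correct and follows the paper's proof essentially step for step. It uses the same three-term row split to get $\|\widehat M-M\|_2\le\|\widehat M-M\|_F\le\varepsilon$, the same comparison $\|\widehat M\widehat a\|_2\le\|\widehat M a\|_2\le\varepsilon$ giving $\|M\widehat a\|_2\le2\varepsilon$, and the same decomposition of $\widehat a$ along $\ker M=\operatorname{span}(a)$ and its orthogonal complement. The only difference is that you cite Weyl's inequality for the $\widehat\delta$ variant, whereas the paper derives the one-sided bound $s_k(\widehat M)\le s_k(M)+\varepsilon$ directly from the min--max formula over all $n$ singular values; this is the same argument, and your convention of counting zeros correctly gives $s_{n-1}(M)=\delta$.
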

\begin{proof}
We subtract one transported vector by writing
\begin{align*}
 \widehat J_\ell\widehat v_\ell-J_\ell v_\ell
 ={}&(\widehat J_\ell-J_\ell)v_\ell
 +J_\ell(\widehat v_\ell-v_\ell)\\
 &+(\widehat J_\ell-J_\ell)(\widehat v_\ell-v_\ell).
\end{align*}
The triangle inequality bounds its norm by $p_\ell+L_\ell t_\ell+p_\ell t_\ell$. Summing the squared row bounds gives $\|\widehat M-M\|_2\le\|\widehat M-M\|_F\le\varepsilon$. Since $a$ is an admissible unit vector and $Ma=0$, minimality gives $\|\widehat M\widehat a\|_2\le\|\widehat M a\|_2\le\varepsilon$. Another triangle inequality gives $\|M\widehat a\|_2\le2\varepsilon$.

We write $c=\langle\widehat a,a\rangle$ and $z=\widehat a-ca$, so $z\perp a$. Since $\ker M=\operatorname{span}(a)$, the singular-value decomposition gives $\|Mz\|_2\ge\delta\|z\|_2$. Also $M\widehat a=Mz$ and $\|z\|_2=\sqrt{1-c^2}$. These facts prove \textup{\eqref{eq:prefix-direction-bound}}.

For completeness, the singular-value comparison follows from the variational formula
\[
 s_k(B)=\min_{\substack{V\subseteq\mathbb R^n\\\dim V=n-k+1}}
         \max_{\substack{z\in V\\\|z\|_2=1}}\|Bz\|_2,
 \qquad s_1(B)\ge\cdots\ge s_n(B).
\]
The formula follows by expanding in right singular vectors. Every subspace of the stated dimension intersects the span of the first $k$ such vectors, and choosing the span of vectors $k,\ldots,n$ attains the bound. The inequality $\|\widehat Mz\|_2\le\|Mz\|_2+\varepsilon$ therefore gives $s_k(\widehat M)\le s_k(M)+\varepsilon$. Taking $k=n-1$ yields $\delta\ge\widehat\delta-\varepsilon$. Substitution proves the final claim.
\end{proof}

The estimate requires both the exact rank condition and a justified error budget. An observed gap alone does not establish either premise. The prefix estimates must approximate the appropriate local Jacobians, not only the prefix values at the dual points. The minimization also includes null directions of wide matrices, which an economy singular-value decomposition can omit. If the two unit directions are oriented to have nonnegative inner product, their Euclidean distance is at most $\sqrt2$ times the left side of \textup{\eqref{eq:prefix-direction-bound}}. Indeed, for the acute angle $\vartheta$, we have $2(1-\cos\vartheta)\le2\sin^2\vartheta$. This converts a line estimate to a vector estimate after orientation has been resolved. It does not determine a row magnitude or bias.

The reference procedures use rank tests to group compatible dual points~\cite{canales2024,carlini2025}. The number of points needed depends on their transported span. In the evaluated implementation, clustering also uses true hidden activations, and extraction of deeper layers uses the true prefix. These accesses are separate from the linear-algebra statement above.


We read triplet files sequentially and retain at most two thousand points per neuron. This bounds the stored points per cluster, though it can change which constraints are available for recovery. We store each estimated direction separately from its magnitude-corrected row and metadata. The stored direction leaves its sign unresolved for the later search. Magnitude correction and ground-truth comparisons are privileged operations, as described below.

\subsection{Sign Recovery}
\label{sec:sign-recovery}
We use the distance-to-toggle procedure of Carlini et al.~\cite[Section~6]{carlini2025}. The procedure estimates which side of a critical hyperplane tends to reach a subsequent detectable event sooner. We separate the exact local calculation from the statistical assumption used by the vote.

At an isolated target crossing of neuron $(i,j)$, we fix the activation pattern of the preceding layers. Let $J\in\mathbb R^{d_i\times d_0}$ be the Jacobian of the target layer's pre-activations with respect to the input. We write $e_j\in\mathbb R^{d_i}$ for the $j$th standard coordinate vector and define $a=J^{\mathsf T}e_j\in\mathbb R^{d_0}$. Thus $a$ is the target neuron's critical normal in input coordinates. We assume $a\ne0$. For the nonzero decision-boundary normals $n_+$ and $n_-$ on the positive and negative sides, define the orthogonal tangent projectors
\[
 P_\pm=I_{d_0}-\frac{n_\pm n_\pm^{\mathsf T}}{\|n_\pm\|_2^2},
\]
where $I_{d_0}$ is the identity matrix on $\mathbb R^{d_0}$. When $P_+a$ and $P_-a$ are nonzero, we define the unit tangent directions
\[
 v_+=\frac{P_+a}{\|P_+a\|_2},\qquad
 v_-=-\frac{P_-a}{\|P_-a\|_2}.
\]
These signs ensure that the directions enter opposite sides of the target critical hyperplane. Indeed, symmetry and idempotence of each orthogonal projector give $a^{\mathsf T}P_\pm a=\|P_\pm a\|_2^2$. Hence $a^{\mathsf T}v_+=\|P_+a\|_2>0$ and $a^{\mathsf T}v_-=-\|P_-a\|_2<0$. Different projections and intervening activation changes can still give different effects on the other neurons.

For a controlled comparison, suppose $Jv_+=r+\kappa e_j$ and $Jv_-=-r-\kappa e_j$, where $e_j$ is the target coordinate vector, $r_j=0$, and $\kappa>0$. Let $D_{-j}\in\mathbb R^{d_i\times d_i}$ be diagonal, with zero in entry $j$ and the fixed activation slopes in the other entries and set $q=D_{-j}r$. The post-activation velocities are then $q+\kappa e_j$ and $-q-\alpha\kappa e_j$, respectively. Therefore
\begin{equation}
\label{eq:sign-norm}
 \|\dot h_+\|_2^2-\|\dot h_-\|_2^2=(1-\alpha^2)\kappa^2.
\end{equation}
This follows because $q_j=0$, so the cross terms vanish. The identity explains how controlling other coordinates creates a norm difference. The matched-control condition is an assumption and need not hold for boundary-projected walks.

Before any intervening activation change, consider a future pre-activation with $z_k(0)\ne0$ along a unit-speed straight segment. It has the form $z_k(t)=z_k(0)+t\nu_k$. Its first positive zero occurs at the following distance, with $+\infty$ denoting the absence of a positive zero.
\[
 \tau_k=\begin{cases}
 -z_k(0)/\nu_k,&z_k(0)\nu_k<0,\\
 +\infty,&\text{otherwise}.
 \end{cases}
\]
The earliest future toggle on that segment is $\min_k\tau_k$. The rate $\nu_k$ depends on a downstream linear functional of the target-layer velocity. A larger velocity norm alone does not order these hitting times. After an earlier-layer crossing, the local maps must be recomputed and distances accumulated across segments.

We record the distances $d_+$ and $d_-$ on the two hypothesized sides and vote for the side with smaller distance as the positive side. Censored walks and ties require an explicit rule. The following bound states what a majority vote establishes when the required probabilistic assumption holds.

\begin{lemma}
\label{lem:sign-vote}
Let $L\ge1$ be an integer. Suppose $V_1,\ldots,V_L$ are independent Bernoulli variables indicating correct votes, and $\mathbb E V_\ell\ge1/2+\gamma$ for every $\ell$, with $\gamma>0$. Then
\[
 \Pr\left(\sum_{\ell=1}^L V_\ell\le L/2\right)
 \le\exp(-2L\gamma^2).
\]
For $0<\epsilon<1$, taking $L\ge\log(1/\epsilon)/(2\gamma^2)$ suffices for error probability at most $\epsilon$.
\end{lemma}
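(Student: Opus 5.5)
This is a direct application of Hoeffding's inequality for bounded independent variables, together with a reduction from heterogeneous means to a common lower bound. Write $S=\sum_{\ell=1}^L V_\ell$ and $\mu=\mathbb E S=\sum_\ell \mathbb E V_\ell$. The hypothesis $\mathbb E V_\ell\ge 1/2+\gamma$ gives $\mu\ge L/2+L\gamma$. The event $S\le L/2$ is therefore contained in the event $S-\mu\le -L\gamma$. It suffices to bound $\Pr(S-\mu\le -L\gamma)$.

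Each $V_\ell$ takes values in $[0,1]$, and the variables are independent. Hoeffding's inequality then gives
\[
 \Pr(S-\mu\le -s)\le\exp\!\left(-\frac{2s^2}{\sum_{\ell=1}^L(1-0)^2}\right)=\exp(-2s^2/L)
\]
for every $s>0$. Substituting $s=L\gamma>0$ yields $\exp(-2L\gamma^2)$, which is the first claim. Differing means cause no difficulty, since Hoeffding centers at the true mean $\mu$ and we only use $\mu\ge L/2+L\gamma$ to enlarge the deviation event.

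To keep the argument self-contained, I would also sketch Hoeffding's lemma. For a variable $X\in[0,1]$ with mean $p$ and any $\lambda>0$, it states that $\mathbb E e^{-\lambda(X-p)}\le e^{\lambda^2/8}$. One proof uses convexity of the exponential to reduce to a two-point distribution, then bounds the second derivative of the log-moment generating function by $1/4$. Applying Markov's inequality to $e^{-\lambda(S-\mu)}$, factoring by independence, and choosing $\lambda=4s/L$ recovers the bound above. Getting the constant $2$ requires exactly this sharp $1/8$ in the lemma, and that is the only nontrivial ingredient. I would cite it as standard rather than reprove it.

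For the sample-size statement, $L\ge\log(1/\epsilon)/(2\gamma^2)$ gives $2L\gamma^2\ge\log(1/\epsilon)$. Since the exponential is monotone, $\exp(-2L\gamma^2)\le\epsilon$. Here $0<\epsilon<1$ makes the threshold positive, and the requirement that $L$ be an integer at least $1$ is compatible with it. I would close with one remark. The bound controls the event $S\le L/2$, which includes ties at $S=L/2$. Counting ties as errors is therefore conservative, and this matches the paper's note that ties require an explicit rule.
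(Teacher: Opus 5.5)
Your proof is correct and follows the paper's argument essentially step for step. You bound $\mathbb E S$ below by $L(1/2+\gamma)$, enlarge the event to $S-\mathbb E S\le -L\gamma$, apply the one-sided Hoeffding inequality with ranges $[0,1]$, invert $\exp(-2L\gamma^2)\le\epsilon$ for the sample size, and count ties as failures. Your added sketch of Hoeffding's lemma, with the optimizing choice $\lambda=4s/L$, is accurate but not needed, since the paper also cites the inequality as classical.
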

\begin{proof}
We write $S=\sum_{\ell=1}^L V_\ell$ and $m=\mathbb E S$. The expectation assumption gives $m\ge L(1/2+\gamma)$. On the event $S\le L/2$, we therefore have $S-m\le-L\gamma$. Independence and the bounds $0\le V_\ell\le1$ allow the one-sided Hoeffding inequality~\cite{hoeffding1963}. Hence
\[
 \Pr(S\le L/2)
 \le\Pr(S-m\le-L\gamma)
 \le\exp\!\left(-\frac{2(L\gamma)^2}{\sum_{\ell=1}^L(1-0)^2}\right)
 =\exp(-2L\gamma^2).
\]
This bound includes a tied vote count in the failure event. The requirement $\exp(-2L\gamma^2)\le\epsilon$ is equivalent to $L\ge\log(1/\epsilon)/(2\gamma^2)$, which proves the sample-size assertion.
\end{proof}

The lemma does not prove a positive $\gamma$ for a given neuron. A consistent implementation artifact can generate unanimous incorrect votes. Reused or adaptively selected walks also require justification before this independence-based bound is interpreted as an error guarantee. The exact concentration result is classical~\cite{hoeffding1963}, while the usefulness of the distance asymmetry is an empirical property of the extraction method.

We next give a deterministic condition for preserving a local distance vote under perturbation. This condition concerns prescribed affine continuations. We compare paths with the same arclength parameter and matched downstream coordinates. Applying it to paths with intervening activation changes requires a separate analysis of the later segments.

\begin{proposition}
\label{prop:local-vote-stability}
Let $\mathcal I_+$ and $\mathcal I_-$ be finite nonempty sets of downstream coordinates. On side $s\in\{+,-\}$, write the exact affine continuations as $z_{s,k}(t)=a_{s,k}+t b_{s,k}$ and their approximations as $\widehat z_{s,k}(t)=\widehat a_{s,k}+t\widehat b_{s,k}$ for $t\ge0$. Suppose, for every coordinate on both sides,
\[
 a_{s,k}b_{s,k}<0,\quad |a_{s,k}|\ge m>0,\quad
 |b_{s,k}|\ge\lambda>0,\quad |a_{s,k}|\le A,
\]
and suppose $|\widehat a_{s,k}-a_{s,k}|\le\varepsilon_a<m$ and $|\widehat b_{s,k}-b_{s,k}|\le\varepsilon_b<\lambda$.
Define the first affine hitting distances
$T_s=\min_{k\in\mathcal I_s}(-a_{s,k}/b_{s,k})$ and
$\widehat T_s=\min_{k\in\mathcal I_s}(-\widehat a_{s,k}/\widehat b_{s,k})$.
With
\[
 E=\frac{\varepsilon_a}{\lambda-\varepsilon_b}
   +\frac{A\varepsilon_b}{\lambda(\lambda-\varepsilon_b)},
\]
we have $|\widehat T_s-T_s|\le E$. If $T_- -T_+>2E$, then $\widehat T_+<\widehat T_-$. The reverse statement holds after exchanging the sides.
\end{proposition}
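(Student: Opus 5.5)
The argument works coordinate by coordinate: bound the perturbation of each individual hitting distance, pass the bound through the minimum, and finish with a margin argument.

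\emph{Step 1: signs are preserved.} Fix a side $s$ and a coordinate $k$, and write $t=-a/b$ and $\widehat t=-\widehat a/\widehat b$. Because $|\widehat a-a|\le\varepsilon_a<m\le|a|$, the estimate $\widehat a$ is nonzero with the sign of $a$. Likewise $|\widehat b|\ge|b|-\varepsilon_b\ge\lambda-\varepsilon_b>0$, and $\widehat b$ has the sign of $b$. Hence $\widehat a\widehat b<0$ as well. Both hitting distances are therefore finite and positive, and each is given by the ratio formula rather than by $+\infty$. This matters, because otherwise the minima could be infinite.

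\emph{Step 2: the single-coordinate estimate.} Write
\[
 \widehat t-t=-\frac{\widehat a-a}{\widehat b}+a\,\frac{\widehat b-b}{b\,\widehat b}.
\]
For the first term, use $|\widehat b|\ge\lambda-\varepsilon_b$, which bounds it by $\varepsilon_a/(\lambda-\varepsilon_b)$. For the second term, use $|a|\le A$, $|b|\ge\lambda$ and $|\widehat b|\ge\lambda-\varepsilon_b$, which bounds it by $A\varepsilon_b/(\lambda(\lambda-\varepsilon_b))$. Together these give $|\widehat t_{s,k}-t_{s,k}|\le E$ for every coordinate. The main care point is choosing this decomposition so that the stated constant $E$ comes out exactly.

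\emph{Step 3: minima over a finite set.} For finite nonempty index sets, $|\min_k x_k-\min_k y_k|\le\max_k|x_k-y_k|$. To see this, let $k^*$ attain $\min_k y_k$. Then $\min_k x_k\le x_{k^*}\le y_{k^*}+E$, and the symmetric argument gives the reverse inequality. This yields $|\widehat T_s-T_s|\le E$.

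\emph{Step 4: the vote.} By Step 3,
\[
 \widehat T_+\le T_++E<T_--E\le\widehat T_-,
\]
where the middle inequality is exactly the hypothesis $T_--T_+>2E$. The reverse statement follows by exchanging the labels $+$ and $-$, since all hypotheses are symmetric in the two sides.
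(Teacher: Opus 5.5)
Your proposal is correct and follows essentially the same route as the paper. Both arguments preserve the signs of intercept and slope, bound each ratio difference via $|\widehat b_{s,k}|\ge\lambda-\varepsilon_b$ with the same two-term split, compare the minima through attaining indices, and conclude with the $2E$ margin inequality.
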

\begin{proof}
Fix a side $s$ and a coordinate $k\in\mathcal I_s$. Since $|a_{s,k}|\ge m>\varepsilon_a$, the perturbation cannot change the sign of the intercept. Likewise, $|b_{s,k}|\ge\lambda>\varepsilon_b$ preserves the sign of the slope. The exact intercept and slope have opposite signs, so the approximate ones do as well. Both affine functions consequently have a unique positive zero. We denote these zeros by $\tau_{s,k}=-a_{s,k}/b_{s,k}$ and $\widehat\tau_{s,k}=-\widehat a_{s,k}/\widehat b_{s,k}$.

The reverse triangle inequality gives $|\widehat b_{s,k}|\ge\lambda-\varepsilon_b>0$. Subtracting the two ratios yields
\begin{align*}
 |\widehat\tau_{s,k}-\tau_{s,k}|
 &\le\frac{|\widehat a_{s,k}-a_{s,k}|}{|\widehat b_{s,k}|}
 +\frac{|a_{s,k}|\,|b_{s,k}-\widehat b_{s,k}|}
 {|b_{s,k}|\,|\widehat b_{s,k}|}\\
 &\le\frac{\varepsilon_a}{\lambda-\varepsilon_b}
 +\frac{A\varepsilon_b}{\lambda(\lambda-\varepsilon_b)}=E.
\end{align*}
Because $\mathcal I_s$ is finite and nonempty, both minima are attained. Choose $k_s$ minimizing the exact distances. Then
$\widehat T_s\le\widehat\tau_{s,k_s}\le\tau_{s,k_s}+E=T_s+E$.
Choose $\widehat k_s$ minimizing the approximate distances. Then
$T_s\le\tau_{s,\widehat k_s}\le\widehat\tau_{s,\widehat k_s}+E=\widehat T_s+E$.
These two inequalities prove $|\widehat T_s-T_s|\le E$ on each side.

If $T_--T_+>2E$, we have
$\widehat T_--\widehat T_+\ge(T_--E)-(T_++E)=T_--T_+-2E>0$.
Thus the approximate distances have the same strict ordering. Exchanging the two sides proves the reverse assertion.
\end{proof}

For a fixed local affine map $z(t)=c+B(h+t v)$, the intercept is $c+Bh$ and the slope is $Bv$. If the approximations satisfy $\|\widehat B-B\|_2\le e_B$, $\|\widehat c-c\|_2\le e_c$, $\|\widehat h-h\|_2\le e_h$ and $\|\widehat v-v\|_2\le e_v$, then valid coordinatewise bounds are
$\varepsilon_a=e_c+e_B\|h\|_2+(\|B\|_2+e_B)e_h$ and
$\varepsilon_b=e_B\|v\|_2+(\|B\|_2+e_B)e_v$.
Indeed, subtract the two intercepts and write the difference as
$(\widehat c-c)+(\widehat B-B)h+\widehat B(\widehat h-h)$.
The slope difference is $(\widehat B-B)v+\widehat B(\widehat v-v)$.
The triangle inequality and $\|\widehat B\|_2\le\|B\|_2+e_B$ give the bounds.

The proposition requires control of the downstream affine maps, the path directions, and a positive distance gap. Matching the target row alone is insufficient. If these continuations describe actual walks only until distance $H$, the hitting distances must be below $H$ and no unmodelled activation crossing may occur before them. Coordinates omitted from $\mathcal I_s$ must be certified not to toggle before the proposed first event. A certificate using the computed gap is also possible. If $\widehat T_- -\widehat T_+>2E$, then $T_+<T_-$ by the same bound. These certificates preserve a distance vote. Identifying that vote with the neuron's true sign still requires the separate sign-signal assumption in Lemma~\ref{lem:sign-vote}.

We must also specify the frame in which a vote is written. Let $w_*$ be a nonzero victim row and suppose $\widehat w=cw_*$ with $c\ne0$, with the bias scaled by the same factor. The victim frame labels the side where the victim pre-activation is positive. The recovered frame labels the side where the recovered pre-activation is positive. If a vote $s_*\in\{-1,1\}$ specifies an orientation in the victim frame, its representation in the recovered frame is $\operatorname{sign}(c)s_*$. This follows because positive $c$ preserves the two sides and negative $c$ exchanges them. The conversion is an algebraic identity. Computing $\operatorname{sign}(c)$ from $\langle\widehat w,w_*\rangle$ uses victim parameters. The orientation conversion makes this computation before the walk and uses its result in selecting directions. We therefore include that access in the assisted procedure, rather than classify it as evaluation only.

For distances, we can certify an ordering without assigning a value to a censored event. Let $\tau_+,\tau_-\in[0,\infty]$ denote distances to the same specified stopping event on two prescribed walks. Suppose recorded intervals $[\ell_+,u_+]$ and $[\ell_-,u_-]$ contain the corresponding distances. If $u_+<\ell_-$, then $\tau_+\le u_+<\ell_-\le\tau_-$. The reverse ordering follows when $u_-<\ell_+$. Otherwise these intervals alone do not certify an ordering. A walk with no event through a verified distance $H$ gives the interval $[H,\infty]$. A numerical failure gives no such lower bound unless the traversal to $H$ was verified. This certificate concerns the event ordering. The relation between that ordering and the neuron's sign remains a separate assumption.

We can obtain a positive event-probability difference from the controlled velocities under an explicit local model. This gives one sufficient mechanism for a sign signal. It does not assume that trained downstream weights follow this model. We write $\Phi(t)=(2\pi)^{-1/2}\int_{-\infty}^t\exp(-u^2/2)\,du$ for the standard normal distribution function.

\begin{proposition}
\label{prop:local-sign-model}
Let $q\in\mathbb R^{d_i}$ satisfy $q_j=0$, let $\kappa>0$, and let $0\le\alpha<1$. Define $v_+=q+\kappa e_j$ and $v_-=-q-\alpha\kappa e_j$. Let $g$ have independent standard normal coordinates, and fix $z_0\ne0$ and $H>0$ independently of $g$. In the affine continuations $z_\pm(t)=z_0+t\langle g,v_\pm\rangle$, let $B_\pm$ indicate a zero at some $t\in(0,H]$. Then
\[
 p_+=\mathbb E B_+=\Phi\!\left(-\frac{|z_0|}{H\sqrt{\|q\|_2^2+\kappa^2}}\right),\qquad
 p_-=\mathbb E B_-=\Phi\!\left(-\frac{|z_0|}{H\sqrt{\|q\|_2^2+\alpha^2\kappa^2}}\right),
\]
where $p_-=0$ if its denominator is zero. We have $\mu=p_+-p_->0$. For an integer $L\ge1$ and $L$ independent draws of $g$, with paired indicators $B_{\ell,+},B_{\ell,-}$,
\[
 \Pr\!\left(\sum_{\ell=1}^L(B_{\ell,+}-B_{\ell,-})\le0\right)
 \le\exp(-L\mu^2/2).
\]
\end{proposition}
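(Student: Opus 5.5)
The plan has three steps: compute each hitting probability exactly from the Gaussian projection, compare the two probabilities by monotonicity of $\Phi$, and then apply a Hoeffding bound to the paired differences.

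\emph{Exact probabilities.} Fix a side $s\in\{+,-\}$ and set $\nu_s=\langle g,v_s\rangle$. Since $g$ is standard Gaussian and $v_s$ is fixed, $\nu_s\sim N(0,\|v_s\|_2^2)$. Because $q_j=0$, the coordinate $j$ is orthogonal to the support of $q$, so $\|v_+\|_2^2=\|q\|_2^2+\kappa^2$ and $\|v_-\|_2^2=\|q\|_2^2+\alpha^2\kappa^2$.

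The affine function $z_0+t\nu_s$ has a zero in $(0,H]$ exactly when $z_0\nu_s<0$ and $|z_0|/|\nu_s|\le H$. Equivalently, $\nu_s$ has the sign opposite to $z_0$ and $|\nu_s|\ge|z_0|/H$. If $\|v_s\|_2>0$, symmetry of the Gaussian gives $\Pr(B_s=1)=\Pr(\nu_s\ge|z_0|/H)=\Phi(-|z_0|/(H\|v_s\|_2))$. If $\|v_-\|_2=0$, then $\nu_-=0$ almost surely. Since $z_0\ne0$, there is no zero, so $p_-=0$.

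\emph{Positivity of $\mu$.} Since $0\le\alpha<1$ and $\kappa>0$, we have $\|v_+\|_2>\|v_-\|_2\ge0$. Because $|z_0|>0$, the argument $-|z_0|/(H\|v_+\|_2)$ is strictly larger than the corresponding argument for $v_-$, or $p_-=0$. The function $\Phi$ is strictly increasing and positive, so $p_+>p_-$ in both cases.

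\emph{Concentration.} The step needing the most care is recognizing that the paired variables are bounded in $[-1,1]$ rather than $[0,1]$. For each draw $\ell$, set $X_\ell=B_{\ell,+}-B_{\ell,-}$. Then $X_\ell\in[-1,1]$, $\mathbb E X_\ell=\mu$, and the $X_\ell$ are independent across $\ell$ because the draws of $g$ are independent. No independence is needed between $B_{\ell,+}$ and $B_{\ell,-}$ within a single pair.

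The event $\sum_\ell X_\ell\le0$ implies $\sum_\ell(X_\ell-\mu)\le-L\mu$. The one-sided Hoeffding inequality~\cite{hoeffding1963} with ranges of length $2$ gives
\[
\Pr\!\left(\sum_{\ell=1}^L X_\ell\le0\right)\le\exp\!\left(-\frac{2(L\mu)^2}{4L}\right)=\exp(-L\mu^2/2).
\]
This is the same argument as Lemma~\ref{lem:sign-vote}, with only the range length changed from $1$ to $2$.
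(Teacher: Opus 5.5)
Your proof is correct and matches the paper's argument step for step. Like the paper, you use the Gaussian law of $\langle g,v_\pm\rangle$ with variances $\|q\|_2^2+\kappa^2$ and $\|q\|_2^2+\alpha^2\kappa^2$ (from $q_j=0$), normal symmetry together with the zero-variance convention for $p_-$, and strict monotonicity in the standard deviation (plus $p_+>0$) to get $\mu>0$, and Hoeffding on the paired differences in $[-1,1]$ with no independence required within a pair.
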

\begin{proof}
For a fixed vector $v$, the rate $\langle g,v\rangle$ is normal with mean zero and variance $\|v\|_2^2$. If $z_0>0$, a zero by $H$ occurs exactly when this rate is at most $-z_0/H$. If $z_0<0$, it occurs when the rate is at least $|z_0|/H$. Normal symmetry gives the displayed probability in either case. A zero rate cannot reach zero from $z_0\ne0$, which proves the convention for zero variance.

Since $q_j=0$, the two variances are $\|q\|_2^2+\kappa^2$ and $\|q\|_2^2+\alpha^2\kappa^2$. The first is strictly larger. For the positive threshold $|z_0|/H$, the displayed normal tail is strictly increasing in the positive standard deviation. Also $p_+>0$ when the negative-side variance vanishes. Thus $\mu>0$ in every case.

For each independent draw, define $D_\ell=B_{\ell,+}-B_{\ell,-}$. The two indicators within a pair need not be independent. Their difference satisfies $-1\le D_\ell\le1$ and $\mathbb E D_\ell=p_+-p_-=\mu$. The draws of $g$ make the variables $D_1,\ldots,D_L$ independent. The event $\sum_\ell D_\ell\le0$ implies $\sum_\ell(D_\ell-\mu)\le-L\mu$. Hoeffding's lower-tail inequality~\cite{hoeffding1963} therefore bounds its probability by
\[
 \exp\!\left(-\frac{2(L\mu)^2}{\sum_{\ell=1}^L(1-(-1))^2}\right)
 =\exp(-L\mu^2/2),
\]
which proves the stated bound.
\end{proof}

This proposition concerns the excess of fixed-horizon events under affine continuation. It does not order the two first-event distances for every draw or prove correctness of the implemented distance vote. The assumptions exclude changes in the local map before $H$ and require an independent Gaussian downstream response with the stated controlled velocities. Actual walks can violate each condition. We can test those conditions using logged local rates and intervening events, but should not identify the model's positive $\mu$ with a measured sign advantage without those checks.


The last hidden layer has no later hidden activation to toggle. The reference method can instead use intersections with other class decision boundaries~\cite[Section~6.2]{carlini2025}. Thus the absence of future hidden layers does not make last-layer sign recovery impossible. The first layer also need not lack a sign signal. The weak endpoint results in our implementation concern its chosen walks and measured events.

The modified runner uses \protect\path{along_decision_boundary} in every layer, replacing the \protect\path{perfect_control_along_decision_boundary} configuration at the endpoints. The latter caps a recorded negative-side distance at three times the positive-side distance. A cap $\min\{d_-,3d_+\}$ with $d_+>0$ preserves the strict comparison $d_->d_+$ when that comparison already holds. The cap alone therefore does not prove a universal positive-vote bias. Censoring, unsuccessful walks, and default sign decisions must be analyzed together. The runner also bounds the inner bisection iterations and saves per-neuron signs and vote summaries. We treat low-information sign estimates as initialization for the label-based search, rather than proof that their confidence values are calibrated.

\subsection{Privileged Information in the Implementation}
\label{sec:cheats}
The reference code~\cite{carlini2025} contains operations that depart from the intended hard-label oracle. We retain its D1--D6 identifiers so that the manuscript can be compared with the code. Table~\ref{tab:cheats} separates runtime information unavailable through labels (type A) from direct parameter reads (type W). These operations assist the attack itself. They are distinct from ground-truth comparisons used only to evaluate a completed attack.

\begin{table}[htb]
\centering
\caption{Privileged accesses in the evaluated extraction implementation.}
\label{tab:cheats}
\setlength{\tabcolsep}{4pt}
\footnotesize
\resizebox{0.8\linewidth}{!}{%
\begin{tabular}{p{2.2cm}p{2.2cm}cp{5.6cm}}
\toprule
Operation & Reads & Type & Role in the reported implementation \\
\midrule
D1 -- Boundary detection & Logit gap & A & In the D1 ablation, disabling it reduces accepted directions on the tiny ReLU victim from 142/256 to 24/256. This is an implementation ablation, not an impossibility result. \\
D2 -- Boundary refinement & Exact logit gradients & A & A label-based fallback exists in the reference code. Its use must be specified in a strict-oracle experiment. \\
D3 -- Neuron clustering & Hidden activations & A & The reference code also provides an SVD-based alternative. Its reported higher cost does not make privileged access necessary in principle. \\
D4 -- Prefix initialization & True weight matrices & W & Supplies the prefix used to extract deeper layers. The present experiments do not replace it with a fully reconstructed prefix. \\
D5 -- Signature scaling & True weight vector & W & Corrects scale and checks candidates. In the Leaky ReLU path, ground-truth agreement also determines candidate acceptance. \\
D6 -- Sign-walk parameters & True weights and biases & W & Supplies the prefix used by the sign-recovery walk. The recovered-frame conversion also compares recovered rows with true rows before choosing directions. \\
\bottomrule
\end{tabular}
}
\end{table}

D4 and D6 avoid the error propagation that a reconstructed prefix would introduce. D5 also affects the interpretation of direction coverage, since accepting a candidate by comparison with its true row uses information unavailable to an attacker. Removing these accesses requires a separate end-to-end evaluation. Later fitting from labels does not retrospectively make the extraction stages label-only.

\section{Leaky ReLU Adaptation}
\label{sec:leaky}
We replace the zero negative-side slope by a fixed $\alpha\in(0,1)$. Qi et al.~\cite{qi2026leakyrelu} also study extraction with several activation functions. Our experiments concern the adaptation of this reference implementation and its use in the subsequent reconstruction.

\subsection{Local Geometry}
\label{sec:leaky:geom}
For a fixed scalar pre-activation, ReLU and Leaky ReLU change slope at the same zero. For fixed first-layer weights and biases, their first-layer critical hyperplanes therefore agree. This does not imply that all deeper critical sets or decision boundaries agree. A deeper pre-activation depends on the outputs of earlier activations, which change when $\alpha$ changes. For example, a second-layer pre-activation $\sigma_\alpha(x)+1$ has no zero when $\alpha=0$, whereas for $\alpha>0$ it vanishes at $x=-1/\alpha$.

The local bend calculation nevertheless has the same form. Consider an isolated crossing of neuron $(i,j)$, with the prefix and all other local activation states fixed. On either side, write its pre-activation as $\eta_{i,j}(x)=a^{\mathsf T}x+b$. Let $u\in\mathbb R^K$ be the downstream linear response to its activation. The difference between the two affine logit maps, extended to the same neighborhood, is
\[
 f_+(x)-f_-(x)=(1-\alpha)u(a^{\mathsf T}x+b).
\]
Thus the gradients of the $m$--$n$ logit difference satisfy
\begin{equation*}
 \nabla(f_m-f_n)_+-\nabla(f_m-f_n)_-
 =(1-\alpha)(u_m-u_n)a.
\end{equation*}
This follows by subtracting the slopes $1$ and $\alpha$ of the target activation and composing with the fixed downstream affine map. If $u_m=u_n$, the crossing has no effect on this logit difference. Even when the right-hand side is nonzero, a visible bend requires the two boundary normals to be nonparallel and the same classes to remain maximal. We therefore transfer the local method, not an assertion of identical geometry or guaranteed detectability.

\subsection{Slope Calculations and Candidate Acceptance}
\label{sec:leaky:algebra}
Prefix linearization uses the matrices $D_i(\rho)$ defined in Section~\ref{app:prelim}. Negative coordinates contribute a slope $\alpha$ instead of zero. We apply this substitution consistently in the PyTorch and NumPy activation functions, the prefix linearization, and the sign-walk forward simulations. The coordinate-hit buffer is allocated with the prefix output dimension $d_{i-1}$, rather than the target output dimension $d_i$. This corrects a shape error on nonuniform architectures under either activation. The same substitution changes the downstream response used by the distance-to-toggle procedure. It can alter the strength of the sign statistic, and no universal accuracy bound follows from $\alpha$ being small.

The ReLU implementation checks for zero-hit prefix coordinates and requires singular values $s_{\mathrm{last}}<10^{-4}$ and $s_{\mathrm{penultimate}}>10^{-2}$. These thresholds approximate the one-dimensional-kernel condition in Eq.~\eqref{eq:direction-rank}. A hit uses the finite threshold $10^{-4}$, so zero hits need not mean mathematically zero activation. The ReLU implementation accepts a direction when its dual-space matrix has a numerically isolated smallest singular value. In the Leaky ReLU path we relax this gate and retain candidates that agree sufficiently well with the true neuron. This comparison uses privileged information. Accordingly, the higher Leaky ReLU coverage measures candidates accepted by the assisted implementation. It does not establish the same coverage for an attacker who must select candidates from labels alone.

\subsection{Rank and Inactive Coordinates}
\label{sec:leaky:jacobian}
For a fixed matrix $B$ mapping an input perturbation to pre-activation perturbations, the local activation map gives $DB$. When $0<\alpha<1$, all diagonal entries of $D$ lie in $[\alpha,1]$. Hence
\begin{proposition}
\label{prop:leaky-rank}
Let $B\in\mathbb R^{k\times n}$ and let $D\in\mathbb R^{k\times k}$ be diagonal with entries in $[\alpha,1]$, where $0<\alpha<1$. For every $v\in\mathbb R^n$,
$
 \alpha\|Bv\|_2\le\|DBv\|_2\le\|Bv\|_2.
$
Consequently $\ker(DB)=\ker B$ and $\operatorname{rank}(DB)=\operatorname{rank}B$. If instead $D$ has diagonal entries in $\{0,1\}$, as for ReLU, then $\operatorname{rank}(DB)\le\operatorname{rank}B$, with strict inequality possible.
\end{proposition}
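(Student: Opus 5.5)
The argument is a direct coordinatewise computation, followed by two elementary linear-algebra consequences and an explicit example for the ReLU case. First, fix $v$ and write $w=Bv\in\mathbb R^k$. Then $DBv$ has coordinates $D_{kk}w_k$, so
\[
 \|DBv\|_2^2=\sum_k D_{kk}^2w_k^2 .
\]
Since each $D_{kk}\in[\alpha,1]$ with $\alpha>0$, we have $\alpha^2\le D_{kk}^2\le1$. Summing these coordinatewise bounds gives $\alpha^2\|w\|_2^2\le\|DBv\|_2^2\le\|w\|_2^2$. Taking square roots, which is legitimate because all quantities are nonnegative, yields the two-sided inequality.

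Next, the kernel equality follows from the two-sided bound. If $Bv=0$, the upper bound gives $DBv=0$. Conversely, if $DBv=0$, the lower bound gives $\alpha\|Bv\|_2\le0$. Because $\alpha>0$, this forces $Bv=0$. Both matrices act on $\mathbb R^n$, so the rank--nullity theorem converts $\ker(DB)=\ker B$ into $\operatorname{rank}(DB)=\operatorname{rank}B$. An alternative is to note that $D$ is invertible and that multiplication by an invertible matrix preserves rank, but the route through the norm bound keeps the argument tied to the displayed inequality.

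For the ReLU case with $D$ having entries in $\{0,1\}$, the image of $DB$ is $D$ applied to the image of $B$. A linear map cannot increase the dimension of a subspace, so $\operatorname{rank}(DB)\le\operatorname{rank}B$. Equivalently, $\ker B\subseteq\ker(DB)$, and rank--nullity gives the same inequality. To show that the inequality can be strict, I will take $k=n=1$, $B=(1)$, and $D=(0)$. Then $\operatorname{rank}B=1$ while $\operatorname{rank}(DB)=0$.

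No step is a real obstacle. The only point needing care is the role of strict positivity of $\alpha$ in the converse kernel inclusion, since that is exactly where the Leaky ReLU and ReLU cases differ.
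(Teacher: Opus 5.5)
Your proposal is correct and follows the paper's proof essentially step for step. Both arguments use the coordinatewise bound $\alpha^2\le d_\ell^2\le1$ on $\|DBv\|_2^2=\sum_\ell d_\ell^2(Bv)_\ell^2$, obtain the kernel equality from $\alpha>0$ and the rank equality from rank--nullity, and treat the ReLU case via the image argument. The differences are cosmetic: your $1\times1$ counterexample $B=(1)$, $D=(0)$ works as well as the paper's $B=I_2$, $D=\operatorname{diag}(1,0)$, and you should rename your summation index, since $k$ already denotes the number of rows.
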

\begin{proof}
Writing $D=\operatorname{diag}(d_1,\ldots,d_k)$ gives
$\|DBv\|_2^2=\sum_\ell d_\ell^2(Bv)_\ell^2$.
Bounding each $d_\ell^2$ between $\alpha^2$ and 1 proves the inequalities. Since $\alpha>0$, $DBv=0$ if and only if $Bv=0$. Rank--nullity then gives equality of ranks. For ReLU, the image of $DB$ is the image under $D$ of the image of $B$, so its dimension cannot exceed $\operatorname{rank}B$. Strict inequality occurs, for example, with $B=I_2$ and $D=\operatorname{diag}(1,0)$, whose ranks are two and one, respectively.
\end{proof}

This proposition concerns one fixed pre-activation matrix. It does not assert equality of ReLU and Leaky ReLU network Jacobians, since changing the activation also changes deeper patterns and maps. Products with later weight matrices can remain rank deficient. Repeated multiplication by slopes as small as $\alpha$ can worsen conditioning. Thus Leaky ReLU avoids rank loss caused solely by this local diagonal activation, but does not guarantee a unique or well-conditioned SVD solution. Both this mechanism and the relaxed candidate gate can affect the reported coverage.

\section{Metaheuristic Sign Search}
\label{sec:signsearch}
We search over the signs of recovered rows while using cached victim labels to score candidate networks. Within one scoring call, candidate evaluations reuse the cached labels. The driver can invoke this procedure repeatedly and request the same labels again, so we count label acquisition at every invocation. It does not guarantee recovery of the victim's signs, because different parameter assignments can agree on a finite set.

\subsection{Sign Assignments}
\label{sec:signsearch:problem}
For $k$ recovered rows in a layer, we write $s\in\{-1,1\}^k$ for a sign assignment and $\widehat f_s$ for the corresponding reconstructed logits. The recovered row for neuron $j$ is $s_j\widehat A_j$. When we negate a row, we also negate its bias estimate, or recompute that estimate from stored dual points. This preserves the zero set of its affine pre-activation in the current prefix coordinates.

Let $E=\{(x_\ell,y_\ell)\}_{\ell=1}^L$ be a cached scoring set with $y_\ell=\mathcal O(x_\ell)$. We define agreement by
\[
 A_E(s)=\frac1L\sum_{\ell=1}^L
 \mathbf1[\arg\max_c[\widehat f_s(x_\ell)]_c=y_\ell].
\]
Exhaustive search evaluates all $2^k$ assignments. It is practical only for modest $k$, and identifies an optimum of this finite-set objective rather than necessarily the true signs.

\subsection{Single-Flip Search}
\label{sec:signsearch:exact}
A greedy search changes one sign at a time and retains changes that improve agreement. The discrete objective can have plateaus and local optima. Several simultaneous flips can improve agreement even when no individual flip does. This motivates methods that sometimes accept a worse score. The controlled benchmark starts from the true 256-hidden-neuron ReLU model with half its signs corrupted. The reported mean sign accuracies are $0.557$ for greedy search, $0.569$ for simulated annealing with agreement, $0.639$ for simulated annealing with the margin, and $0.666$ for parallel tempering with the margin. The first three means use eight seeds, whereas parallel tempering uses seeds 0--5. These numbers evaluate correction from a privileged initialization and are separate from the end-to-end results in Table~\ref{tab:permodel}. The unequal seed sets also preclude treating their mean differences as paired estimates. Per-seed benchmark outputs are needed for a statistical comparison.

\subsection{Agreement and Margin Objectives}
\label{sec:signsearch:meta}
We define the reconstructed model's mean label margin by
\[
 m_E(s)=\frac1L\sum_{\ell=1}^L\left(
 [\widehat f_s(x_\ell)]_{y_\ell}
 -\max_{c\ne y_\ell}[\widehat f_s(x_\ell)]_c\right).
\]
The label $y_\ell$ comes from the victim, but all logits in this expression come from the reconstructed model. Thus computing the margin does not require victim logits. The margin can distinguish assignments with equal agreement. It is a surrogate objective, not a smooth function of discrete signs, and increasing it need not increase agreement. It also depends on logit scale.

We use agreement to select the final saved candidate and a margin-based score to guide exploration. The margin score is $m_E$ without a stated additional normalization. The shared driver first runs greedy search, stores the best state by $A_E$, and restores that state after exploration. Including the initial state among the saved candidates ensures that the returned training-set agreement is no smaller than at entry. This guarantees neither improved held-out agreement nor more correct signs. Search stops when agreement reaches $1-10^{-12}$. We do not equate final selection by agreement with accepting every search move by agreement.

\subsection{Simulated Annealing and Parallel Tempering}
\label{sec:signsearch:threat}
For a chosen exploration score $S$, simulated annealing~\cite{Kirkpatrick1983} proposes a sign change from $s$ to $s'$ and, for symmetric proposals, accepts it with probability
\[
 \min\{1,\exp((S(s')-S(s))/T)\},\qquad T>0.
\]
The default schedule decreases geometrically from $0.02$ to $10^{-4}$ over four sweeps. A decreasing temperature makes worsening moves less likely. At low temperature this approaches greedy search for $S$, not gradient ascent.

Parallel tempering~\cite{Geyer1991} runs chains at several positive temperatures. For chains at $T_a,T_b$ with states $s_a,s_b$, the standard swap acceptance probability for a score being maximized is
\[
 \min\{1,\exp((T_a^{-1}-T_b^{-1})(S(s_b)-S(s_a)))\}.
\]
The parallel-tempering defaults use six replicas on a geometric temperature ladder from $10^{-3}$ to $0.05$. Both methods can move between local optima, but neither guarantees the best assignment within a fixed computation budget. We use simulated annealing as the default and compare parallel tempering on selected victims. The tables report the combined reconstruction, not an isolated proof of correct sign recovery.

\subsection{Selection with Independent Scoring Data}
\label{sec:signsearch:selection}
We separate the statistical selection problem from the optimizer's search. Let $\mathcal F=\{g_s:s\in\mathcal S\}$ be a nonempty finite family of candidate classifiers, where $\mathcal S$ is a finite index set and $M=|\mathcal S|$. We construct the family using fitting data and fix it before observing the scoring sample. An index can encode a sign assignment and an output layer fitted using only the fitting labels. We allow distinct indices to define the same classifier.

Let $X_1,\ldots,X_n$ be independent scoring inputs, with $n\ge1$. They need not have identical distributions. We define the mean target agreement and its empirical estimate by
\[
 p_s=\frac1n\sum_{\ell=1}^n\Pr(g_s(X_\ell)=\mathcal O(X_\ell)),\qquad
 \widehat p_s=\frac1n\sum_{\ell=1}^n\mathbf1[g_s(X_\ell)=\mathcal O(X_\ell)].
\]
For identically distributed inputs, $p_s$ is ordinary population agreement under their common distribution. For fixed-size independent strata, it is the agreement under the mixture weighted by the stratum proportions. If the family is random, we condition on the fitting data and every random choice used to construct it. The scoring inputs must remain independent with the specified distributions after this conditioning.

\begin{proposition}
\label{prop:sign-selection}
Under these assumptions, for $\varepsilon>0$,
\[
 \Pr\!\left(\max_{s\in\mathcal S}|\widehat p_s-p_s|>\varepsilon\right)
 \le2M\exp(-2n\varepsilon^2).
\]
Suppose the returned index $\widehat s$ satisfies $\widehat p_{\widehat s}\ge\max_s\widehat p_s-\rho$ for a specified $\rho\ge0$. For $0<\beta<1$, with probability at least $1-\beta$,
\begin{equation}
\label{eq:selection-regret}
 \max_s p_s-p_{\widehat s}
 \le 2\sqrt{\frac{\log(2M/\beta)}{2n}}+\rho.
\end{equation}
If $M>1$, there is a unique maximizing index $s_*$, and $p_{s_*}-p_s\ge\Delta>0$ for every $s\ne s_*$, then an exact empirical maximizer satisfies
\begin{equation}
\label{eq:selection-gap}
 \Pr(\widehat s\ne s_*)\le(M-1)\exp(-n\Delta^2/2).
\end{equation}
\end{proposition}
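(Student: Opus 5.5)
The plan is to fix the finite family by conditioning on the fitting data and all construction randomness. Conditionally, the scoring inputs are independent with their specified distributions. For each fixed index $s$, the indicators $Z_{s,\ell}=\mathbf1[g_s(X_\ell)=\mathcal O(X_\ell)]$ are independent, take values in $[0,1]$, and satisfy $\mathbb E\frac1n\sum_\ell Z_{s,\ell}=p_s$. This holds even without identical distributions, because $p_s$ was defined as the average of the per-input probabilities. The two-sided Hoeffding inequality~\cite{hoeffding1963}, as already used in Lemma~\ref{lem:sign-vote}, then gives $\Pr(|\widehat p_s-p_s|>\varepsilon)\le2\exp(-2n\varepsilon^2)$. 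A union bound over the $M$ indices proves the first display. Since the bound holds conditionally for every realization of the family, it also holds unconditionally.

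For \eqref{eq:selection-regret}, I will set $\varepsilon=\sqrt{\log(2M/\beta)/(2n)}$, so that $2M\exp(-2n\varepsilon^2)=\beta$. On the complementary event, every deviation is at most $\varepsilon$. Let $s^\dagger$ attain $\max_s p_s$; it exists because $\mathcal S$ is finite. Then
\[
 p_{\widehat s}\ge\widehat p_{\widehat s}-\varepsilon\ge\widehat p_{s^\dagger}-\rho-\varepsilon\ge p_{s^\dagger}-2\varepsilon-\rho .
\]
This chain holds even though $\widehat s$ depends on the sample, because the uniform event controls all indices simultaneously. That uniformity is the reason the union bound is needed rather than a single Hoeffding bound at $\widehat s$.

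For \eqref{eq:selection-gap}, the cruder uniform bound would give $2M$ and exponent $2n(\Delta/2)^2=n\Delta^2/2$. The sharper constant $M-1$ comes from using pairwise differences. For an exact maximizer, $\widehat s\ne s_*$ requires that some $s\ne s_*$ has $\widehat p_s\ge\widehat p_{s_*}$. How ties are broken does not matter, since including the tie case only enlarges the event. For a fixed $s\ne s_*$, consider $D_\ell=Z_{s_*,\ell}-Z_{s,\ell}\in[-1,1]$. These variables are independent across $\ell$, and their mean average is $p_{s_*}-p_s\ge\Delta$. The event $\sum_\ell D_\ell\le0$ forces a downward deviation of at least $n\Delta$. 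The one-sided Hoeffding bound with ranges of length $2$ gives $\exp(-2(n\Delta)^2/(4n))=\exp(-n\Delta^2/2)$, exactly as in Proposition~\ref{prop:local-sign-model}. A union bound over the $M-1$ competitors finishes the argument.

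The main obstacle is not the concentration itself but the legitimacy of treating $g_s$ as fixed. The conditioning step must be stated carefully. The family and the tie rule must be measurable with respect to the fitting data and construction randomness. Conditional independence of the scoring sample, with the stated marginals, is an assumption of the proposition rather than something to prove. I will also note that duplicate classifiers under distinct indices only make the union bound looser, so they cause no difficulty.
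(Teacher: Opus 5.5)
Your proposal is correct and follows essentially the same route as the paper's proof. It uses a per-index two-sided Hoeffding bound with a union over the $M$ indices, then the three-term comparison on the uniform-deviation event with $\varepsilon=\sqrt{\log(2M/\beta)/(2n)}$, and finally paired differences in $[-1,1]$ with the one-sided Hoeffding bound and a union over the $M-1$ competitors, counting ties as failures.
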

\begin{proof}
For each fixed $s$, the agreement indicators are independent and lie in $[0,1]$. The two one-sided Hoeffding bounds~\cite{hoeffding1963} give $\Pr(|\widehat p_s-p_s|>\varepsilon)\le2\exp(-2n\varepsilon^2)$. Summing over the $M$ indices proves the first assertion. No independence between candidates is needed.

On the event that all deviations are at most $\varepsilon$, choose an index $s_*$ maximizing $p_s$. We have
\begin{align*}
 p_{s_*}-p_{\widehat s}
 ={}&(p_{s_*}-\widehat p_{s_*})
 +(\widehat p_{s_*}-\widehat p_{\widehat s})
 +(\widehat p_{\widehat s}-p_{\widehat s})\\
 \le{}&2\varepsilon+\rho.
\end{align*}
Setting $\varepsilon=\sqrt{\log(2M/\beta)/(2n)}$ proves \textup{\eqref{eq:selection-regret}}.

For the gap assertion, define the paired differences
\[
 Z_\ell(s)=\mathbf1[g_{s_*}(X_\ell)=\mathcal O(X_\ell)]
            -\mathbf1[g_s(X_\ell)=\mathcal O(X_\ell)].
\]
They are independent across $\ell$, lie in $[-1,1]$, and their average expectation is at least $\Delta$. Hoeffding's lower-tail inequality therefore gives
$\Pr(\sum_\ell Z_\ell(s)\le0)\le\exp(-n\Delta^2/2)$.
An incorrect empirical selection requires some other index to tie or exceed $s_*$ in empirical agreement. Summing the bound over the $M-1$ competitors proves \textup{\eqref{eq:selection-gap}}, counting ties conservatively as failures.
\end{proof}

Exhaustive search over $k$ binary signs has $M\le2^k$. A sufficient sample count for selection error at most $\beta$ in the gap setting is $n\ge2\log((M-1)/\beta)/\Delta^2$. For $k=8$, $\Delta=0.05$, and $\beta=0.05$, this gives 6,830 inputs after rounding upward. The value of $\Delta$ here is an assumption, not an estimate for the experimental victims. Proposition~\ref{prop:sign-selection} proves sign identification only when the intended assignment belongs to the family and uniquely maximizes the target agreement. Prediction-equivalent assignments have no positive gap.

For simulated annealing or parallel tempering, \textup{\eqref{eq:selection-regret}} requires a justified optimization error $\rho$. The algorithms do not supply that guarantee merely by saving their best visited state. Nor can we apply a fixed-family bound using only the number of visited candidates when those candidates are trained or refined on the scoring data. A fixed family of all possible candidates may still be considered if it is specified independently, but its size must include that entire family. The exhaustive block experiments in Section~\ref{sec:exp:audited} address this distinction.

\section{Learning-Assisted Reconstruction}
\label{sec:mlrecon}
We construct a complete model from the accepted directions. We estimate biases, fit an output layer, search signs, refit the output layer, and train the remaining parameters while masking recovered-row gradients. Exact row restoration is a separate update rule. These stages use stored dual points and cached labels. Their initialization retains the privileged accesses described in Section~\ref{sec:cheats}.

\subsection{Incomplete Parameters and Sign Errors}
\label{sec:mlrecon:propagation}
Directions alone leave signs, biases, unrecovered neurons, and the output layer unresolved. A sign error has an exact effect at the target coordinate because
\begin{equation}
\label{eq:sign-flip-identity}
 \sigma_\alpha(-z)-\sigma_\alpha(z)=-(1+\alpha)z\qquad(z\in\mathbb R).
\end{equation}
For $z\ge0$, the left side is $-\alpha z-z$. For $z<0$, it is $-z-\alpha z$, giving the same expression. The identity assumes that both the incoming row and its bias are negated. Negating only the row is a different change.

\begin{proposition}
\label{prop:sign-error}
Consider two networks with identical parameters except that one incoming row and its bias in hidden layer $i$ are negated. We denote the original and modified logit maps by $f$ and $\widehat f$, respectively. Let $z(x)$ be that neuron's original pre-activation. If $g_i$ is the common map from $h_i$ to the logits and has Lipschitz constant $L_i$ from Euclidean norm to maximum norm, then
\[
 \|\widehat f(x)-f(x)\|_\infty\le (1+\alpha)L_i|z(x)|.
\]
If $f$ has a unique predicted class $y$ and margin
$\mu(x)=f_y(x)-\max_{c\ne y}f_c(x)$, its label is unchanged whenever
\[
 \mu(x)>2(1+\alpha)L_i|z(x)|.
\]
\end{proposition}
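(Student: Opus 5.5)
The plan is to compare the two networks layer by layer and show that they differ only in one coordinate of $h_i$. Since all parameters in layers $1,\ldots,i-1$ coincide, the prefix output $h_{i-1}(x)$ is the same in both networks. Fix $x$. Write $j$ for the negated neuron, so its original pre-activation is $z(x)=A_j^{(i)}h_{i-1}(x)+b_j^{(i)}$. The modified pre-activation is then exactly $-z(x)$, because both the row and the bias are negated. Every other coordinate of $\eta_i(x)$ is unchanged. Hence the modified hidden vector $\widehat h_i(x)$ differs from $h_i(x)$ only in coordinate $j$, and by the identity \eqref{eq:sign-flip-identity} that difference is
\[
 \widehat h_{i,j}(x)-h_{i,j}(x)=\sigma_\alpha(-z(x))-\sigma_\alpha(z(x))=-(1+\alpha)z(x).
\]
It follows that $\|\widehat h_i(x)-h_i(x)\|_2=(1+\alpha)|z(x)|$.

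Next, the layers after $i$ carry identical parameters in both networks, so $f=g_i\circ h_i$ and $\widehat f=g_i\circ\widehat h_i$ with the same map $g_i$. Applying the assumed Lipschitz property of $g_i$, from Euclidean norm to maximum norm, gives the first claim:
\[
 \|\widehat f(x)-f(x)\|_\infty\le L_i(1+\alpha)|z(x)|.
\]
The only point requiring care is that the modified network really shares $g_i$. This holds because the outgoing column of neuron $j$ is not altered. Only the incoming row and bias are negated, and the statement says every other parameter is identical.

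For the label statement, write $\epsilon=(1+\alpha)L_i|z(x)|$, so that $|\widehat f_c(x)-f_c(x)|\le\epsilon$ for every class $c$. For any $c\ne y$,
\[
 \widehat f_y(x)-\widehat f_c(x)\ge f_y(x)-f_c(x)-2\epsilon\ge\mu(x)-2\epsilon>0.
\]
Thus $y$ is the strict unique maximizer of $\widehat f(x)$, and the label is unchanged whatever the tie rule, since no tie occurs. I do not expect a real obstacle. The subtle step is only the exactness of the coordinate difference, which depends on negating the bias together with the row, as the remark after \eqref{eq:sign-flip-identity} stresses.
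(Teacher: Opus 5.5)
Your proposal is correct and follows the paper's proof essentially step for step. You use the shared prefix, the one-coordinate difference $-(1+\alpha)z(x)e_j$ from the sign-flip identity, the Lipschitz property of the common downstream map $g_i$, and the $2\epsilon$ margin comparison against every competitor $c\ne y$.
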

\begin{proof}
The networks have the same prefix through layer $i-1$, so their pre-activations at the modified neuron are $z(x)$ and $-z(x)$. All other pre-activations in layer $i$ agree. By Eq.~\eqref{eq:sign-flip-identity},
\[
 \widehat h_i(x)-h_i(x)=-(1+\alpha)z(x)e_j,
\]
where $e_j$ is the coordinate vector of the modified neuron. Its Euclidean norm is $(1+\alpha)|z(x)|$. The parameters after layer $i$ agree, so the same downstream map $g_i$ acts on both hidden vectors. Its Lipschitz property gives
$\|\widehat f(x)-f(x)\|_\infty\le L_i\|\widehat h_i(x)-h_i(x)\|_2=(1+\alpha)L_i|z(x)|$.

Set $\epsilon=(1+\alpha)L_i|z(x)|$. For every class $c\ne y$, the logit bound gives $\widehat f_y(x)\ge f_y(x)-\epsilon$ and $\widehat f_c(x)\le f_c(x)+\epsilon$. Hence
$\widehat f_y(x)-\widehat f_c(x)\ge f_y(x)-f_c(x)-2\epsilon\ge\mu(x)-2\epsilon$.
Under the stated margin condition, this quantity is positive for every competitor. Thus $y$ remains the unique predicted class.
\end{proof}

Since each hidden activation is 1-Lipschitz, we may take
\[
 L_i=\|A^{(r+1)}\|_{2\to\infty}
       \prod_{k=i+1}^{r}\|A^{(k)}\|_2,
\]
where an empty product is 1. To see this, an affine hidden layer changes the Euclidean distance by at most its matrix norm, its bias cancels from the difference, and the activation does not increase that distance. Composing these bounds through the hidden layers and applying the output norm gives the displayed constant. This proves a sufficient condition for label preservation, not an error-rate estimate. A sign error can be harmless when its outgoing response vanishes or the classification margin is large. No assumption about weight-direction recovery forces $z(x)$ to be positive on half the inputs.

The raw reconstruction also has zero biases and a randomly initialized output layer, as well as missing hidden rows. Its low agreement therefore cannot be attributed to signs alone. The stage-by-stage ablation in Section~\ref{sec:exp:ablation} separates these changes empirically.

We can also bound errors arising at several layers without assuming that a single sign is the only change. Consider two networks with the same architecture and activation, with neurons and positive scale conventions aligned. We write $\widehat A^{(i)},\widehat{\boldsymbol b}^{(i)}$ for the reconstructed parameters, $\widehat h_i$ for their hidden representations, and $\widehat f$ for their logit map. For hidden layers define $e_i=\|\widehat A^{(i)}-A^{(i)}\|_2$ and $q_i=\|\widehat{\boldsymbol b}^{(i)}-\boldsymbol b^{(i)}\|_2$.

\begin{proposition}
\label{prop:network-error}
Fix $R\ge0$ and consider inputs with $\|x\|_2\le R$. Define $B_0=R$, $U_0=0$, and, for $1\le i\le r$,
\[
 B_i=\|A^{(i)}\|_2 B_{i-1}+\|\boldsymbol b^{(i)}\|_2,\qquad
 U_i=\|\widehat A^{(i)}\|_2 U_{i-1}+e_iB_{i-1}+q_i.
\]
Then $\|h_i(x)\|_2\le B_i$ and $\|\widehat h_i(x)-h_i(x)\|_2\le U_i$. Define
\[
 V=\|\widehat A^{(r+1)}\|_{2\to\infty}U_r
 +\|\widehat A^{(r+1)}-A^{(r+1)}\|_{2\to\infty}B_r
 +\|\widehat{\boldsymbol b}^{(r+1)}-\boldsymbol b^{(r+1)}\|_\infty.
\]
Then $\|\widehat f(x)-f_\theta(x)\|_\infty\le V$. If the victim has a unique winning class and margin $\mu(x)>2V$, the two classifiers give the same label.
\end{proposition}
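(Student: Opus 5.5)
The plan is to argue by induction on the hidden layer index $i$, proving the two bounds $\|h_i(x)\|_2\le B_i$ and $\|\widehat h_i(x)-h_i(x)\|_2\le U_i$ together. The base case $i=0$ holds because $h_0(x)=\widehat h_0(x)=x$, so $\|h_0(x)\|_2\le R=B_0$ and the difference is $0=U_0$.

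For the magnitude bound at layer $i$, I will use that $\sigma_\alpha(0)=0$ and $\sigma_\alpha$ is 1-Lipschitz coordinatewise. Together these give $\|\sigma_\alpha(u)\|_2\le\|u\|_2$. Then $\|h_i(x)\|_2\le\|A^{(i)}h_{i-1}(x)+\boldsymbol b^{(i)}\|_2\le\|A^{(i)}\|_2B_{i-1}+\|\boldsymbol b^{(i)}\|_2=B_i$.

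For the difference, the 1-Lipschitz property in Euclidean norm (stated in Section~\ref{app:prelim}) reduces the problem to bounding the pre-activation difference. I will split it as
\[
 \widehat A^{(i)}\widehat h_{i-1}-A^{(i)}h_{i-1}+(\widehat{\boldsymbol b}^{(i)}-\boldsymbol b^{(i)})
 =\widehat A^{(i)}(\widehat h_{i-1}-h_{i-1})+(\widehat A^{(i)}-A^{(i)})h_{i-1}+(\widehat{\boldsymbol b}^{(i)}-\boldsymbol b^{(i)}).
\]
The triangle inequality and the inductive hypotheses then give $\|\widehat A^{(i)}\|_2U_{i-1}+e_iB_{i-1}+q_i=U_i$. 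The key design choice is to attach the reconstructed matrix to the propagated error and the true hidden vector to the matrix error, because this matches the stated recursion exactly.

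At the output layer I will use the same three-term splitting, now in the maximum norm. The needed operator fact is $\|Av\|_\infty=\max_j|A_jv|\le\max_j\|A_j\|_2\|v\|_2=\|A\|_{2\to\infty}\|v\|_2$, which follows from Cauchy--Schwarz row by row. Applying it to $\widehat A^{(r+1)}(\widehat h_r-h_r)$ and $(\widehat A^{(r+1)}-A^{(r+1)})h_r$, and bounding the bias term directly in $\|\cdot\|_\infty$, gives $\|\widehat f(x)-f_\theta(x)\|_\infty\le V$.

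The label claim repeats the margin argument from the proof of Proposition~\ref{prop:sign-error}, with $\epsilon=V$. For each $c\ne y$ we have $\widehat f_y-\widehat f_c\ge\mu(x)-2V>0$, so $y$ remains the unique argmax and the tie rule is irrelevant.

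I expect no step to be genuinely hard. The only point needing care is choosing which factor carries the hatted matrix in the telescoping split, since the other split would produce $\|A^{(i)}\|_2U_{i-1}+e_i\widehat B_{i-1}$ instead of the stated recursion. I will also state explicitly that $\|\sigma_\alpha(u)\|_2\le\|u\|_2$ uses $\sigma_\alpha(0)=0$.
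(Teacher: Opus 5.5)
Your proposal is correct and follows the paper's proof essentially step for step. It uses the same joint induction from $h_0=\widehat h_0=x$, the magnitude bound from $\sigma_\alpha(0)=0$ and the 1-Lipschitz property, the same three-term split obtained by inserting $\widehat A^{(i)}h_{i-1}(x)$, the row-wise bound $\|Bz\|_\infty\le\|B\|_{2\to\infty}\|z\|_2$ at the output layer, and the same margin argument for the label.
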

\begin{proof}
Fix an input with $\|x\|_2\le R$. At layer zero, $h_0(x)=\widehat h_0(x)=x$, so $\|h_0(x)\|_2\le B_0$ and $\|\widehat h_0(x)-h_0(x)\|_2=U_0=0$.

Suppose the stated bounds hold at layer $i-1$. The activation vanishes at zero and is 1-Lipschitz. Therefore
\begin{align*}
 \|h_i(x)\|_2
 &\le\|A^{(i)}h_{i-1}(x)+\boldsymbol b^{(i)}\|_2\\
 &\le\|A^{(i)}\|_2 B_{i-1}+\|\boldsymbol b^{(i)}\|_2=B_i.
\end{align*}
For the reconstructed pre-activation, write $\widehat\eta_i(x)=\widehat A^{(i)}\widehat h_{i-1}(x)+\widehat{\boldsymbol b}^{(i)}$. Subtract the two pre-activations and insert $\widehat A^{(i)}h_{i-1}(x)$. This gives
\begin{align*}
 \widehat\eta_i(x)-\eta_i(x)
 ={}&\widehat A^{(i)}(\widehat h_{i-1}(x)-h_{i-1}(x))\\
 &+(\widehat A^{(i)}-A^{(i)})h_{i-1}(x)
 +(\widehat{\boldsymbol b}^{(i)}-\boldsymbol b^{(i)}).
\end{align*}
The induction hypotheses and the triangle inequality imply
\begin{align*}
 \|\widehat h_i(x)-h_i(x)\|_2
 &\le\|\widehat\eta_i(x)-\eta_i(x)\|_2\\
 &\le\|\widehat A^{(i)}\|_2U_{i-1}+e_iB_{i-1}+q_i=U_i.
\end{align*}
This proves both hidden-layer bounds by induction.

At the output layer, write
\begin{align*}
 \widehat f(x)-f_\theta(x)
 ={}&\widehat A^{(r+1)}(\widehat h_r(x)-h_r(x))\\
 &+(\widehat A^{(r+1)}-A^{(r+1)})h_r(x)
 +(\widehat{\boldsymbol b}^{(r+1)}-\boldsymbol b^{(r+1)}).
\end{align*}
For each term involving a matrix, use $\|Bz\|_\infty\le\|B\|_{2\to\infty}\|z\|_2$. The bounds just proved give $\|\widehat f(x)-f_\theta(x)\|_\infty\le V$.

Let $y$ be the victim's unique winning class. For every $c\ne y$, we have
$\widehat f_y(x)-\widehat f_c(x)\ge[f_\theta(x)]_y-[f_\theta(x)]_c-2V\ge\mu(x)-2V$.
If $\mu(x)>2V$, every such difference is positive. The reconstructed classifier therefore also predicts $y$.
\end{proof}

This result connects aligned parameter errors with sufficient conditions for label preservation. It does not supply small errors for missing rows. Direction estimates alone also do not bound $e_i$ until magnitudes, signs, and coordinate matching have been resolved. Products of weight norms can make the bound uninformative even when measured agreement is high. Those limitations must be checked before using the estimate to certify a reconstructed prefix or a classifier.

\subsection{Bias Estimates from Dual Points}
\label{sec:mlrecon:bias}
For an exact dual point $x^*$ of neuron $(i,j)$,
\[
 A_j^{(i)}h_{i-1}(x^*)+b_j^{(i)}=0.
\]
Consequently, if $\widehat A_j^{(i)}=cA_j^{(i)}$ and the true prefix features are known in matching coordinates, the compatible bias is $\widehat b_j^{(i)}=cb_j^{(i)}$. In the reconstructed network we instead define
\begin{equation}
\label{eq:bias-eq}
 \widehat b_j^{(i)}(x^*)=-\widehat A_j^{(i)}\widehat h_{i-1}(x^*).
\end{equation}
For each neuron with at least one assigned point, we take the median over at most 30 such estimates, proceeding from the first layer to the last. For a deeper layer, this is an estimate relative to the reconstructed prefix, not an unconditional recovery of the true bias.

To include approximate dual points, let $x_\ell$ be an assigned point and define its true critical residual by $r_\ell=A_j^{(i)}h_{i-1}(x_\ell)+b_j^{(i)}$. Write $\widehat A_j^{(i)}=cA_j^{(i)}+e^{\mathsf T}$ and $\widehat h_{i-1}(x_\ell)=h_{i-1}(x_\ell)+\delta_\ell$. Substituting into \textup{\eqref{eq:bias-eq}} gives
\[
 \widehat b_j^{(i)}(x_\ell)-cb_j^{(i)}
 =-cr_\ell-cA_j^{(i)}\delta_\ell
 -e^{\mathsf T}h_{i-1}(x_\ell)-e^{\mathsf T}\delta_\ell.
\]
Thus its absolute value is at most
\[
 |c|\bigl(|r_\ell|+\|A_j^{(i)}\|_2\|\delta_\ell\|_2\bigr)
 +\|e\|_2\bigl(\|h_{i-1}(x_\ell)\|_2+\|\delta_\ell\|_2\bigr).
\]
The critical residual vanishes at an exact point. If more than half the estimates have this bound at most $\epsilon$, their median lies in $[cb_j^{(i)}-\epsilon,cb_j^{(i)}+\epsilon]$. To prove this, order the estimates. Fewer than half lie below the interval and fewer than half lie above it. The middle estimate for an odd sample size therefore belongs to the interval. For an even sample size, both middle estimates belong to it, so their arithmetic mean also belongs to it. This is the median used by the implementation. The argument permits a minority of arbitrary outliers, but does not remove a common prefix error or a common critical-point residual. Later training may reduce prediction loss without correcting the true bias.

\subsection{Output-Layer Fitting}
\label{sec:mlrecon:output}
The output layer introduces no activation kink of its own, so hidden-neuron signature recovery does not directly estimate its rows. Its class differences do affect decision boundaries and can be studied by other methods~\cite{canalesmartinez2025somelayers}. Here we fit a multinomial logistic regression model using $z_\ell=\widehat h_r(x_\ell)$ and cached labels $y_\ell$. For weights $W$ and biases $b$, the data loss is
\[
 \mathcal L(W,b)=\frac1L\sum_{\ell=1}^L\left(
 \log\sum_{c=1}^K\exp((Wz_\ell+b)_c)
 -(Wz_\ell+b)_{y_\ell}\right).
\]
A finite fit requires an appropriate stopping rule or regularization, especially for separable data. We use scikit-learn's multinomial \texttt{LogisticRegression} with the \texttt{lbfgs} solver, an intercept, $C=10^6$, and a maximum of 2000 iterations. This is weakly regularized fitting, rather than an unregularized identification theorem. If some classes are absent, copying zeros into their output rows does not prevent those classes from winning the reconstructed argmax. A coverage check or explicit restriction to fitted classes is needed. The fitted coefficients define a decoder for the reconstructed features. They are not identified as the victim's unique coefficients from labels.

When sign--refinement cycles are enabled, we fit a provisional output layer before the first sign pass, alternate search with 20-epoch refinement bursts, and fit the output layer again afterward. With zero cycles, sign search instead precedes the output fit. Tables~\ref{tab:params-arch} and \ref{tab:params-shared} specify the reported cyclic runs. Both ordering choices use labels rather than victim logits, but repeated function calls can recache those labels and must be counted separately.

\subsection{Gradient Masking and Exact Row Preservation}
\label{sec:mlrecon:frozen}
Let $R_i$ be the indices of accepted rows in layer $i$, and let $\widehat A_j^{(i)}$ denote each selected row after sign assignment and scale selection. We write $W^{(i)}$ for the current trainable matrix of the reconstructed network, to distinguish it from the victim matrix $A^{(i)}$. The intended exact constraint for refinement is
$ W_j^{(i)}=\widehat A_j^{(i)}\ (j\in R_i)$.
The biases, output layer, and unaccepted rows remain trainable. An update satisfying this constraint retains the selected directions by construction. It does not prove that they were recovered without privileged information, that every retained sign is correct, or that the final hidden features equal the victim's features.

The implementation masks recovered-row gradients before optimization. We analyze AdamW with decoupled weight decay~\cite{loshchilov2019adamw}. Adam without weight decay is included by setting the decay coefficient to zero. Coupled weight decay inside the Adam gradient is a different update and is not covered by the following argument.

\begin{proposition}
\label{prop:row-preservation}
Suppose a nonzero recovered row $w_0$ is masked at every optimizer step, its Adam first and second moments initially vanish, and no operation adds an unmasked update to it. Let $\eta_t\ge0$ be the step size and $\lambda_t\ge0$ its decoupled weight-decay coefficient. If $0\le\eta_t\lambda_t<1$, then, without row restoration and in exact arithmetic, after $T$ steps
\[
 w_T=\left(\prod_{t=0}^{T-1}(1-\eta_t\lambda_t)\right)w_0.
\]
Its signed direction is preserved. Its exact value is preserved when $\lambda_t=0$ for all steps, or when the saved row is restored after each update.
\end{proposition}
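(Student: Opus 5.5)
The plan is an induction on the step count $t$, using the explicit AdamW recursion for the masked row. First I write the update for a generic parameter block. With gradient $g_t$, first moment $m_{t+1}=\beta_1 m_t+(1-\beta_1)g_t$, second moment $v_{t+1}=\beta_2 v_t+(1-\beta_2)g_t^{\odot2}$, and bias-corrected ratios $\widehat m_{t+1}/(\sqrt{\widehat v_{t+1}}+\epsilon)$, the decoupled update reads
\[
 w_{t+1}=w_t-\eta_t\lambda_t w_t-\eta_t\,\frac{\widehat m_{t+1}}{\sqrt{\widehat v_{t+1}}+\epsilon}.
\]
Masking means that the gradient entering the optimizer for this row is $g_t=0$ at every step.

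The key invariant, proved jointly by induction, is that $m_t=0$ and $v_t=0$ for every $t$, and that $w_t=\bigl(\prod_{s<t}(1-\eta_s\lambda_s)\bigr)w_0$. The base case is the hypothesis of vanishing initial moments together with the empty product. For the inductive step, $g_t=0$ and $m_t=v_t=0$ give $m_{t+1}=v_{t+1}=0$. The bias corrections divide by $1-\beta^{t+1}>0$ and so keep zeros at zero. The adaptive term is therefore $0/(0+\epsilon)=0$. Here I need $\epsilon>0$, or the convention that a zero numerator yields a zero update. I will state that assumption explicitly, since the proposition assumes exact arithmetic and no other unmasked update. The update then reduces to $w_{t+1}=(1-\eta_t\lambda_t)w_t$, which closes the induction.

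For the corollaries, $0\le\eta_t\lambda_t<1$ makes every factor lie in $(0,1]$. The product is therefore a positive scalar $c_T$. Since $w_0\ne0$, $w_T=c_Tw_0$ is nonzero with the same signed direction $w_T/\|w_T\|_2=w_0/\|w_0\|_2$. If all $\lambda_t=0$, each factor is $1$ and $w_T=w_0$. With restoration after each update, the row is reset to $w_0$ at the end of every step. Exact preservation then holds trivially, and it does not depend on the moments.

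The main obstacle is not the algebra but making precise which implementation conditions the argument needs. The proof relies on three such conditions. First, the masked gradient must enter the moment updates as zero, rather than being masked only after the moments are formed. Second, $\epsilon>0$ must avoid $0/0$. Third, coupled weight decay must be excluded, since it would put $\lambda w_t$ into $g_t$ and make the moments nonzero. I would make these explicit in the proof and note that a nonzero initial moment, such as one carried over from a previous optimizer state, would break the invariant.
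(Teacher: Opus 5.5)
Your proposal is correct and follows essentially the same argument as the paper: an induction showing the masked row's Adam moments stay zero (using positive bias-correction denominators and $\epsilon>0$), which reduces each step to $w_{t+1}=(1-\eta_t\lambda_t)w_t$, followed by positivity of the product for signed-direction preservation and the two exact-preservation cases. Folding the weight recursion into the same induction, rather than iterating it afterward as the paper does, is only a cosmetic difference.
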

\begin{proof}
Let $g_t$ be the masked gradient of the recovered row at step $t$. The masking assumption gives $g_t=0$. Write $m_t$ and $v_t$ for its first and second Adam moments at the start of that step, with $m_0=v_0=0$. For Adam coefficients $\beta_1,\beta_2\in[0,1)$, the moment updates are
\[
 m_{t+1}=\beta_1m_t+(1-\beta_1)g_t,\qquad
 v_{t+1}=\beta_2v_t+(1-\beta_2)(g_t\odot g_t),
\]
where $\odot$ denotes coordinatewise multiplication. If $m_t=v_t=0$, both updated moments are zero. Induction therefore gives $m_t=v_t=0$ at every step. Let $\epsilon_{\mathrm{Adam}}>0$ be the denominator offset. The bias-corrected moments are zero because their denominators $1-\beta_1^{t+1}$ and $1-\beta_2^{t+1}$ are positive. Each adaptive-update coordinate is therefore $0/(\sqrt{0}+\epsilon_{\mathrm{Adam}})=0$.

Only the decoupled weight-decay term remains. Thus
$w_{t+1}=w_t-\eta_t\lambda_t w_t=(1-\eta_t\lambda_t)w_t$.
Iterating from $t=0$ to $T-1$ gives the displayed product. Every factor is strictly positive by hypothesis, so the product is positive and $w_T/\|w_T\|_2=w_0/\|w_0\|_2$. This proves signed-direction preservation. If every $\lambda_t$ is zero, the product equals one and $w_T=w_0$. If the row is copied back from its saved value after each update, every completed step instead ends with $w_{t+1}=w_0$, independently of the tentative decayed value. This proves the restoration assertion.
\end{proof}

This proposition distinguishes fixed directions from fixed row values. Nonzero inherited moments, unmasked mini-refinement, or later sign changes require separate handling. Moreover, training a bias while rescaling its row generally changes its critical hyperplane. Direction preservation does not imply preservation of the original hidden feature map.

We can enforce the constraint without assumptions on the optimizer moments. Let $P_i$ be the diagonal matrix whose entry is one on $R_i$ and zero elsewhere, and let $W_{\mathrm{lock}}^{(i)}$ store the selected rows at entry to the refinement call. After any proposed update producing $\widetilde W^{(i)}$, define
\begin{equation}
\label{eq:row-projection}
 W^{(i)}=(I-P_i)\widetilde W^{(i)}+P_iW_{\mathrm{lock}}^{(i)}.
\end{equation}
Multiplying by $P_i$ and using $P_i^2=P_i$ gives $P_iW^{(i)}=P_iW_{\mathrm{lock}}^{(i)}$. Thus every projected step satisfies the constraint regardless of weight decay or inherited moments. In floating-point code, copying the saved coordinates back after each update preserves their bit patterns. A checkpoint restored by early stopping must satisfy the same constraint. Intentional sign changes begin a new refinement call with newly selected locked rows. This projection does not prove convergence to a global minimum of the label loss.

The complete ablation update masks gradients and applies AdamW without this restoration. In 906 nonempty layer comparisons immediately before and after refinement, every recovered-row group changed. The largest absolute coordinate change was approximately $1.23241\times10^{-4}$, while the smallest computed row cosine was $0.9999999999999996$. These measurements are consistent with direction retention under fresh moments, not exact row equality. The separate 288-configuration refinement study reports exact preservation in its 144 restoration configurations. Its numeric trajectories contain at least the reported number of invariant rows in each corresponding layer. A direct check of the intended row indices additionally requires exporting the recovered masks independently of the model-object checkpoints. The restoration study does not change the update used by the 48-run ablation sweep.

We also make the checkpoint-selection guarantee explicit. Let $G_0$ be the model entering a refinement call, let $G_1,\ldots,G_T$ be its checked checkpoints, and let $S$ be a fixed nonempty stopping sample with cached victim labels $y(x)$. Define $\widehat p_S(G)=|S|^{-1}\sum_{x\in S}\mathbf1[G(x)=y(x)]$, where $G(x)$ denotes the model's predicted class. If we retain a maximizer $G_*$ over $G_0,\ldots,G_T$, then $\widehat p_S(G_*)\ge\widehat p_S(G_0)$ because $G_0$ belongs to the candidate set. We implement this rule by evaluating and saving $G_0$ before the first optimizer step and restoring the best checked state. If the initial state is excluded, this inequality need not hold. The guarantee concerns the stopping sample only. The selected model may agree less often on an independent test sample. The fixed-family guarantee in Proposition~\ref{prop:sign-selection} can provide an additional population bound only when its independence and family assumptions hold. Reusing stopping labels to construct the checkpoint path does not meet those assumptions automatically.

\section{Experimental Results}
\label{sec:exp}
We compare assisted reconstruction with distillation and examine the individual shortcuts in Section~\ref{sec:exp:audited}. We call fitting on $X_{\mathrm{test}}\cup X_{\mathrm{test2}}$ the joint-fit protocol. The split-fit protocol fits on $X_{\mathrm{test}}$, scores sign candidates on $X_{\mathrm{test2}}$, and reserves $X_{\mathrm{test3}}$ for final reporting. Its refinement also uses the scoring slice for stopping unless a separate stopping pool is specified. Direction coverage uses the extraction procedures and privileged checks of Section~\ref{sec:cheats}. The comparison therefore does not measure the benefit of a strict hard-label cryptanalytic attack at equal information access.

Table~\ref{tab:datacontract} describes three data slices. The first two provide cached labels for fitting and search. The third is also used by the early-stopping watchdog in Table~\ref{tab:params-shared}. It is disjoint from the training inputs but participates in model selection. We consequently call it the evaluation and selection slice, rather than an untouched test set. An independent final test set would be needed for evaluation without this reuse.

\begin{table}[H]
\centering
\caption{Dataset contract for joint-fit experiments. The split-fit comparison uses the separate roles described in Section~\ref{sec:exp:audited}.}
\label{tab:datacontract}
\begin{tabular}{lll}
\toprule
Slice & Role & Source (\protect\path{make_blobs} / CIFAR-10) \\
\midrule
$X_{\mathrm{test}}$  & Phase-3 training pool                   & seed 42 / test batch\\
$X_{\mathrm{test2}}$ & Phase-3 training pool                   & seed 99 / train idx 40000--49999\\
$X_{\mathrm{test3}}$ & Evaluation and early stopping& seed 123 / train idx 10000--19999\\
\bottomrule
\end{tabular}
\end{table}

Tables~\ref{tab:params-arch} and \ref{tab:params-shared} give the settings recorded in the manuscript. The synthetic victims use data generated by \protect\path{make_blobs}. The CIFAR-10 runs use a separate driver and a V100 GPU with float64 arithmetic. The settings were chosen within the available computation budget. They do not establish optimal query or computation complexity.

The synthetic-data contract fixes the cluster centers at seed 42, uses the same scaling transform for all slices, and changes only the sampling seeds 42, 99, and 123. The inputs are clipped and scaled by $x\mapsto\operatorname{clip}(x,-3,3)/3$. Thus different slice seeds do not create different randomly chosen cluster centers. The six-victim batch log records 18:11:05--18:25:24 UTC on 21 June 2026, an elapsed time of 14 minutes 19 seconds including both search arms and reporting. This is a measured run duration, not a controlled speedup ratio. We describe the shortcut-comparison experiment separately in Section~\ref{sec:exp:audited}.

\begin{table}[H]
\centering
\caption{Per-architecture tuned parameters. The CIFAR runs use simulated annealing (SA) with the margin score. PT denotes parallel tempering.}
\label{tab:params-arch}
\setlength{\tabcolsep}{5pt} 
\footnotesize
\resizebox{0.7\linewidth}{!}{%
\begin{tabular}{lcccc}
\toprule
Parameter & \texttt{tiniest} & \texttt{tinier} & \texttt{tiny} & \texttt{full} (CIFAR) \\
\midrule
dual-search rounds        & 6       & 8       & 20      & 150 \\
dual workers / batch      & 7 / 256 & 7 / 256 & 7 / 256 & 1--3 / 256 \\
target triplets / round   & 3000    & 2000    & 10000   & 10000 \\
sign restarts             & 1       & 1       & 2       & 0--1 \\
sign pair-lookahead       & 8       & 8       & 8       & 4 \\
sign refine cycles        & 3       & 3       & 3       & 1 \\
refine epochs             & 300     & 500     & 500     & 500 \\
\bottomrule
\end{tabular}
}
\end{table}

\begin{table}[H]
\centering
\caption{Pipeline parameters held fixed across all victims.}
\label{tab:params-shared}
\setlength{\tabcolsep}{3pt}\footnotesize
\resizebox{0.8\textwidth}{!}{%
\begin{tabular}{lll}
\toprule
Phase & Parameter & Value \\
\midrule
Dual search  & implementation      & \texttt{torch}, CUDA float64 \\
Sign recovery & runner             & \protect\path{batched_sign_recovery.py}, float64 \\
Reconstruction & sign-search method & SA+margin (default), PT+margin (comparison) \\
             & mini-refine          & 20 epochs, lr $5{\times}10^{-3}$, wd $10^{-4}$ \\
             & refine optimiser     & AdamW, cosine lr, wd $10^{-4}$ \\
             & early stopping        & patience 5, evaluation every 10 epochs \\
             & data use             & select on X\_test3, train on X\_test $\cup$ X\_test2 \\
Activation   & \protect\path{LEAKY_ALPHA} & 0.0 (ReLU) / 0.01 (Leaky ReLU) \\
Label access & reconstruction batches & 15 measured calls in the seeded synthetic sweep \\
\bottomrule
\end{tabular}%
}
\end{table}

We count one hard-label request for each input whose victim label is requested. If stage $j$ calls the victim on a batch $B_j$, the request count is $\sum_j|B_j|$. Repeated requests count again. We distinguish this count from batched calls and from the number of distinct inputs labelled. The seeded synthetic sweep records 15 batched calls per phase-3 execution, including the stopping-set labels. Under the joint-fit protocol, each fitting or scoring input is labelled 14 times. Across the six victims, phase 3 requests 57,000--211,024 labels under that protocol and 29,024--96,024 under the split-fit protocols. These are phase-3 costs only. Section~\ref{sec:exp:seeded} reports the additional phase-1 requests and the privileged accesses separately. Caching could reduce repeated label requests, but the measured implementation does not use one cache across all these calls.

\subsection{Evaluation Metric}
\label{sec:exp:eqs-intro}
We write $\widehat{\mathcal O}(x)=\arg\max_c[\widehat f(x)]_c$ for a reconstructed classifier. For a finite nonempty set $X$, define
\[
 C(X)=\frac1{|X|}\sum_{x\in X}\mathbf1[\widehat{\mathcal O}(x)=\mathcal O(x)].
\]
We report three functional metrics. $C_1=C(X_{\mathrm{test3}})$ measures agreement on the evaluation and selection slice in the joint-fit protocol. In the split-fit comparison, this slice is reserved for final reporting. For $C_2$, we generate two pools of 5000 probes. The uniform pool uses the coordinatewise bounding box of the reference slice $X_{\mathrm{test}}$. The Gaussian pool has the reference mean and coordinatewise standard deviations multiplied by 3, with a $10^{-12}$ guard on the standard deviations. We define
\[
 C_2=\tfrac12\bigl(C(X_{\mathrm{uniform}})+C(X_{\mathrm{Gaussian}})\bigr).
\]
Interpolation-path agreement is reported separately and is not included in this average.

For $C_3$, we select at most 1500 evaluation inputs and sample eight unit directions at each point. For each direction, we query the label at radius 2. If that label differs from the starting label, we bisect the segment 12 times while maintaining endpoints with different labels and record the upper endpoint radius. Otherwise the direction is assigned infinite radius. The minimum directional result is clipped at 2, giving a proxy $\widehat d(x)$. This is a label-based search radius, not the exact distance to the nearest decision boundary. Without monotonicity of labels along a ray, bisection need not locate its first crossing, and matching endpoint labels can conceal intermediate crossings.

We bin the proxy by its empirical tertiles, deduplicating repeated quantile edges, and define $C_3$ as agreement in the resulting far bin. If that bin is absent, the implementation uses $C_1$. Ties and clipping mean that the far bin need not contain one third of the subsample. For instance, the CIFAR ReLU and Leaky ReLU far bins contain 1349 and 1327 of 1500 inputs. This input-space proxy differs from the reconstructed logit margin used by sign search.

\begin{table}[H]
\centering
\caption{Functional agreement metrics. None identifies the victim's parameters.}
\label{tab:eqs-metrics}
\setlength{\tabcolsep}{4pt} 
\footnotesize
\resizebox{0.8\linewidth}{!}{%
\begin{tabular}{p{4cm}p{7cm}}
\toprule
Metric & Interpretation \\
\midrule
$C_1$ -- Evaluation-slice agreement & Prediction agreement on the slice also used for early stopping. \\
$C_2$ -- Off-distribution agreement & Prediction agreement under the specified alternative input distributions. \\
$C_3$ -- Margin-conditioned agreement & Prediction agreement in the far bin of the directional search-radius proxy. \\
\bottomrule
\end{tabular}
}
\end{table}

Off-distribution agreement can reveal differences missed by a single evaluation distribution, but high $C_2$ does not prove structural recovery. A distilled model can also agree away from its training samples. Likewise, conditioning on margin does not remove all boundary-estimation or sampling effects. Structural evaluation instead requires direct parameter comparisons after accounting for neuron permutations and scaling ambiguities.

\subsection{Per-Victim Extraction Results}
\label{sec:exp:permodel}
Table~\ref{tab:permodel} gives accepted directions per layer, the reported sign accuracy, task accuracy, and agreement with the victim. The four hidden layers are denoted fc1--fc4. Direction counts incorporate D5 candidate checking. We omit a direction-cosine column because ground-truth acceptance would make that statistic conditional on the check itself.

For a layer with $r_i>0$ accepted rows, define $s_i$ as the fraction whose dot product with the matched true row is positive. The sign column reports the unweighted mean over nonempty layers,
\[
 \overline s=\frac1{|I|}\sum_{i\in I}s_i,\qquad I=\{i:r_i>0\}.
\]
It is not the pooled neuron fraction $\sum_i r_i s_i/\sum_i r_i$. For example, the first row averages $3/8$, $3/7$, and $5/5$, giving $0.601190\ldots$, whereas the pooled fraction is $11/20=0.55$. This resolves the nonintegral-looking sign entry. Neuron matching follows the assisted extraction metadata.

\begin{table}[H]
\centering
\caption{Assisted reconstruction across the victim suite (SA with a margin score). Sign accuracy averages over nonempty hidden layers.}
\label{tab:permodel}
\setlength{\tabcolsep}{4pt}\scriptsize
\resizebox{\textwidth}{!}{%
\begin{tabular}{lllcccc}
\toprule
Model & Recovered & fc1/fc2/fc3/fc4 & \makecell{Sign\\acc} & \makecell{Victim\\acc} & \makecell{Ext\\acc} & Agreement \\
\midrule
tiniest\_relu      & 20/32 (62.5\%)   & 8/8 $\cdot$ 7/8 $\cdot$ 0/8 $\cdot$ 5/8        & 0.601 & 99.90\%  & 99.40\%  & 99.30\%  \\
tiniest\_leakyrelu & 25/32 (78.1\%)   & 8/8 $\cdot$ 8/8 $\cdot$ 3/8 $\cdot$ 6/8        & 0.771 & 100.00\% & 100.00\% & 100.00\% \\
tinier\_relu       & 29/56 (51.8\%)   & 16/16 $\cdot$ 12/16 $\cdot$ 1/16 $\cdot$ 0/8  & 0.347 & 100.00\% & 100.00\% & 100.00\% \\
tinier\_leakyrelu  & 35/56 (62.5\%)   & 16/16 $\cdot$ 13/16 $\cdot$ 4/16 $\cdot$ 2/8  & 0.463 & 100.00\% & 100.00\% & 100.00\% \\
tiny\_relu         & 139/256 (54.3\%) & 64/64 $\cdot$ 61/64 $\cdot$ 14/64 $\cdot$ 0/64  & 0.503 & 100.00\% & 100.00\% & 100.00\% \\
tiny\_leakyrelu    & 226/256 (88.3\%) & 63/64 $\cdot$ 60/64 $\cdot$ 49/64 $\cdot$ 54/64 & 0.500 & 100.00\% & 100.00\% & 100.00\% \\
cifar\_relu        & 499/832 (60.0\%) & 252/256 $\cdot$ 247/256 $\cdot$ 0/256 $\cdot$ 0/64    & 0.531 & 51.79\% & 46.60\% & 56.40\% \\
cifar\_leakyrelu   & 807/832 (97.0\%) & 254/256 $\cdot$ 252/256 $\cdot$ 243/256 $\cdot$ 58/64 & 0.537 & 51.18\% & 46.60\% & 58.29\% \\
\bottomrule
\end{tabular}%
}
\end{table}
The six synthetic-data victims have agreement of at least $99.30\%$ in Table~\ref{tab:permodel}. In particular, \protect\path{tiny_relu} reaches $100\%$ agreement with 139 of 256 directions accepted. Thus complete direction coverage is not necessary for agreement on these inputs. Conversely, \protect\path{cifar_leakyrelu} has 807 of 832 directions accepted but only $58.29\%$ agreement. High direction coverage is not sufficient for high functional agreement.

Task accuracy compares predictions with dataset labels, whereas agreement compares them with victim predictions. They need not coincide because the victim itself makes errors. The distillation comparison below evaluates prediction agreement and does not identify which retained parameters cause a difference.

\subsection{ReLU versus Leaky ReLU Recovery}
\label{sec:exp:scaling}
Table~\ref{tab:scaling} compares accepted directions. Leaky ReLU has 5, 6, 87, and 308 more accepted directions on the four architectures, respectively. The activation, trained parameters, sampled geometry, and acceptance rule all affect this comparison. The counts therefore do not isolate the effect of the negative-side slope alone. The differences also occur across several layers, rather than only at fc4.

\begin{table}[H]
\centering
\caption{Phase 1--2 recovery, ReLU vs Leaky ReLU.}
\label{tab:scaling}
\footnotesize
\resizebox{0.6\linewidth}{!}{%
\begin{tabular}{lccccc}
\toprule
Architecture & \makecell{ReLU\\recovered} & \makecell{Leaky\\recovered} & Delta & fc4 ReLU & fc4 Leaky \\
\midrule
tiniest & 20/32  & 25/32  & $+5$  & 5/8  & 6/8  \\
tinier  & 29/56  & 35/56  & $+6$  & 0/8  & 2/8  \\
tiny    & 139/256 & 226/256 & $+87$ & 0/64 & 54/64 \\
cifar   & 499/832 & 807/832 & $+308$ & 0/64 & 58/64 \\
\bottomrule
\end{tabular}
}
\end{table}

\subsection{Depth and the Deepest Hidden Layer} 
\label{sec:exp:depth} 

In the \protect\path{tiniest_relu} run in Table~\ref{tab:permodel}, fc3 has no accepted directions and fc4 has five. Table~\ref{tab:depth-runs} gives an aggregate summary of 40 repeated extractions. In that summary, fc4 has at least as many accepted directions as fc3 in all 40 runs and strictly more in 35. These counts describe the sampled runs and do not imply monotonicity of coverage with depth. The table gives aggregate counts rather than run-level variation. We examine variation across recorded seeds separately in Section~\ref{sec:exp:seeded}. An unseeded search does not by itself establish independence between runs.

\begin{table}[H] 
\centering 
\caption{Per-layer recovered neurons over $40$ reported repeated extractions of \protect\path{tiniest_relu}.} 
\label{tab:depth-runs} 
\footnotesize
\resizebox{0.4\linewidth}{!}{%
\begin{tabular}{lccc} \toprule Layer & Mean & SD & Range \\ \midrule fc1 & 8.00 & 0.00 & 8 \\ fc2 & 6.97 & 0.16 & 6--7 \\ fc3 & 2.25 & 1.37 & 0--5 \\ fc4 & 4.60 & 0.54 & 3--5 \\ 
\bottomrule
 \end{tabular}
 }
 \end{table}

To examine inactive coordinates, the reported probe set $\mathcal P$ contains $X_{\mathrm{test3}}$ and $200{,}000$ uniform samples from a threefold-inflated data bounding box. We define
\[
 Z_i(\mathcal P)=\{j:\eta_{i,j}(x)\le0\text{ for every }x\in\mathcal P\}.
\]
This records inactivity on a finite probe set, not permanent inactivity over the input space or over all extraction points.

For a particular cluster at layer $i$, let $M_i$ have rows equal to its transported tangent directions in $h_{i-1}$ coordinates. Let $Z$ contain coordinates of $h_{i-1}$ that are identically zero in the neighborhoods used by all those tangents. Under ReLU, column $j$ of $M_i$ is then zero for every $j\in Z$. Writing $e_j$ for the standard coordinate vector gives
\[
 \operatorname{span}\{e_j:j\in Z\}\subseteq\ker M_i.
\]
If the cluster is exact, its true weight row $a$ also satisfies $M_i a^{\mathsf T}=0$. Therefore
\begin{equation}
\label{eq:kernel-bound}
 \dim\ker M_i\ge |Z|+
 \mathbf1\left[a^{\mathsf T}\notin\operatorname{span}\{e_j:j\in Z\}\right].
\end{equation}
Indeed, the coordinate vectors are independent, and $a^{\mathsf T}$ contributes one further independent kernel vector precisely when it is outside their span. The stronger bound $1+|Z|$ needs this additional condition. If $|Z|\ge1$ and that condition holds, a test requiring a one-dimensional kernel rejects the cluster. Probe-set inactivity alone does not establish the premises of this argument.

The reported probe counts $|Z_3(\mathcal P)|$ are 0, 1, and 2 for the three ReLU synthetic architectures. Their fc4 coverage in Table~\ref{tab:scaling} is 5/8, 0/8, and 0/64. This is consistent with inactive coordinates reducing available constraints, but does not prove that the probe counts cause the observed rejection. Under Leaky ReLU, a strictly negative coordinate has nonzero slope $\alpha$, so this particular zero-column argument no longer applies. Other rank deficiencies remain possible.

The three reported \protect\path{tiny_leakyrelu} runs have fc4 counts 54, 56, and 57 and corresponding fc3 counts 49, 50, and 47. These observations show that accepted direction counts need not decrease with depth.

\subsection{Two-Arm Metric Comparison}\label{sec:exp:eqs}
Table~\ref{tab:eqs} compares assisted reconstruction (ext) with a learning-only distillation baseline (dis). The baseline is described as using the same label-query budget. A complete budget comparison must also count extraction and evaluation queries per input and disclose the assisted arm's privileged accesses. Matching cached labels alone does not match the information used by the two procedures.
\begin{table}[H]
\centering
\caption{Per-metric agreement for the extraction arm and a matched distillation baseline.}
\label{tab:eqs}
\footnotesize
\resizebox{0.7\linewidth}{!}{%
\begin{tabular}{lcccccc}
\toprule
Model & \multicolumn{2}{c}{$C_1$} & \multicolumn{2}{c}{$C_2$} & \multicolumn{2}{c}{$C_3$} \\
 & ext & dis & ext & dis & ext & dis \\
\midrule
tiniest\_relu      & 0.9930 & 0.9610 & 0.2166 & 0.3304 & 0.9910 & 0.9731 \\
tiniest\_leakyrelu & 1.0000 & 0.9860 & 0.5880 & 0.3756 & 1.0000 & 1.0000 \\
tinier\_relu       & 1.0000 & 1.0000 & 0.5297 & 0.5295 & 1.0000 & 1.0000 \\
tinier\_leakyrelu  & 1.0000 & 1.0000 & 0.6123 & 0.5132 & 1.0000 & 1.0000 \\
tiny\_relu (SA)    & 1.0000 & 1.0000 & 0.4335 & 0.3338 & 1.0000 & 1.0000 \\
tiny\_relu (PT)    & 1.0000 & 1.0000 & 0.4900 & 0.3338 & 1.0000 & 1.0000 \\
tiny\_leakyrelu    & 1.0000 & 1.0000 & 0.2835 & 0.3741 & 1.0000 & 1.0000 \\
cifar\_relu        & 0.5640 & 0.5033 & 0.3365 & 0.2173 & 0.6012 & 0.5271 \\
cifar\_leakyrelu   & 0.5829 & 0.5286 & 0.3383 & 0.2246 & 0.6292 & 0.5486 \\
\bottomrule
\end{tabular}
}
\end{table}
On five of the nine rows, both $C_1$ and $C_3$ equal $1.0000$ in both arms, so only $C_2$ separates their reported scores. Distillation exceeds assisted reconstruction on $C_2$ by $0.1138$ for \protect\path{tiniest_relu} and $0.0906$ for \protect\path{tiny_leakyrelu}. The synthetic experiments therefore do not show a uniform reconstruction advantage.

All three metrics favor assisted reconstruction on both CIFAR victims. The differences are $0.0607$, $0.1192$, and $0.0741$ for ReLU and $0.0543$, $0.1137$, and $0.0806$ for Leaky ReLU. These are differences in the reported scores. The CIFAR scorecards also record paired discordant counts against the victim reference. On ReLU, the counts of inputs matched only by reconstruction and only by distillation are $b=1851$ and $c=1244$. For Leaky ReLU they are $b=1852$ and $c=1309$. The continuity-corrected McNemar statistic is $(|b-c|-1)^2/(b+c)$, giving $118.655$ and $92.934$. The corresponding one-degree-of-freedom chi-square tail probabilities are approximately $1.25\times10^{-27}$ and $5.41\times10^{-22}$. These summarize paired disagreement in the reported single runs. They do not measure variation across training seeds, and selection using the evaluation slice limits their interpretation as independent-test significance. The reported bootstrap gap intervals are $[4.99,7.18]$ and $[4.36,6.55]$ percentage points. We interpret these intervals as within-run summaries, rather than uncertainty estimates across independently trained victims.

The simulated-annealing and parallel-tempering runs reuse the same extracted artifacts on each synthetic victim. Five victims have the same reported final sign mean in both arms. On \protect\path{tiny_relu}, the mean changes from $0.502756\ldots$ to $0.518381\ldots$, while $C_1$ and $C_3$ remain 1. The off-distribution score changes from $0.4335$ to $0.4900$. These are single-run comparisons. The search can terminate immediately when its scoring agreement is saturated, which limits how much such a comparison tests the two optimizers.

The evaluation scripts also form a composite score from functional agreement and an extraction-specific structural term. We retain the component comparisons in Table~\ref{tab:eqs} rather than use that composite as evidence that distillation cannot recover structure. Its structural term averages layerwise absolute cosine, the sign mean, and accepted-row coverage, and assigns the distillation arm zero because it has no extraction records. That zero is an accounting convention, not a measured absence of aligned parameters. Assigning it creates a built-in advantage for the extraction arm. A fair structural comparison would independently match and assess the distillation weights, allowing the same permutation and scale ambiguities.

\subsection{Additive Reconstruction Ablation}
\label{sec:exp:ablation}
Table~\ref{tab:ablation} builds the Phase-3 reconstruction one component at a time on each \protect\path{make_blobs} victim, recording agreement on the evaluation and selection slice at every stage. This ablation study is a separate execution of the pipeline, thus its final row need not coincide exactly with Table~\ref{tab:permodel}.

\begin{table}[H]
\centering
\caption{Additive Phase-3 ablation on the six \protect\path{make_blobs} victims, evaluated on the selection slice X\_test3.}
\label{tab:ablation}
\setlength{\tabcolsep}{4pt}\footnotesize
\resizebox{0.7\linewidth}{!}{%
\begin{tabular}{llcc}
\toprule
Arch & Stage & ReLU Agr & Leaky ReLU Agr \\
\midrule
\multirow{6}{*}{tiniest}
 & RAW (Phase 1+2)        & 12.40\% & 12.50\% \\
 & + bias recovery        & 22.80\% & 2.80\% \\
 & + output-layer fit     & 92.40\% & 98.00\% \\
 & + sign search          & 99.40\% & 97.90\% \\
 & + masked refinement & 99.70\% & 98.90\% \\
 & distillation baseline  & 96.00\% & 98.60\% \\
\midrule
\multirow{6}{*}{tinier}
 & RAW (Phase 1+2)        & 23.30\% & 25.00\% \\
 & + bias recovery        & 0.00\% & 6.58\% \\
 & + output-layer fit     & 99.94\% & 99.92\% \\
 & + sign search          & 100.00\% & 100.00\% \\
 & + masked refinement & 100.00\% & 100.00\% \\
 & distillation baseline  & 100.00\% & 100.00\% \\
\midrule
\multirow{6}{*}{tiny}
 & RAW (Phase 1+2)        & 3.00\% & 10.68\% \\
 & + bias recovery        & 14.34\% & 12.60\% \\
 & + output-layer fit     & 100.00\% & 100.00\% \\
 & + sign search          & 100.00\% & 100.00\% \\
 & + masked refinement & 100.00\% & 100.00\% \\
 & distillation baseline  & 100.00\% & 100.00\% \\
\bottomrule
\end{tabular}
}
\end{table}

Output-layer fitting produces the largest agreement increase in this ablation. For \protect\path{tiny_relu}, the score rises from $14.34\%$ after bias estimation to $100.00\%$ after the fit. The $3.00\%$ raw score precedes both steps. A successful decoder fit shows that the reconstructed features separate these labeled samples, not that they equal the victim's features.

For \texttt{tinier} and \texttt{tiny}, agreement after fitting is at least $99.92\%$, leaving at most $0.08$ percentage points before saturation. This score cannot usefully distinguish later improvements on those samples. On \texttt{tiniest}, sign search changes agreement by $+7.00$ percentage points under ReLU and $-0.10$ under Leaky ReLU. Refinement adds $+0.30$ and $+1.00$ points, respectively. The decrease on the Leaky ReLU row is possible because search selects by a separate cached scoring set, rather than guaranteeing improvement on the evaluation slice.

Bias estimation sometimes lowers agreement before the output layer is fitted. At that stage the decoder remains randomly initialized, so the score alone is not a reliable measure of the bias estimates. Finally, reconstruction and distillation tie on \texttt{tinier} and \texttt{tiny}. Reconstruction exceeds distillation by $3.70$ and $0.30$ percentage points on the two \texttt{tiniest} rows.

\subsection{Shortcut and Reconstruction Comparisons}
\label{sec:exp:audited}
We examine a separate batch of 48 synthetic-victim runs, comprising four shortcut comparisons, six victims, and two arms. We vary D1, D2, D3, and D5 individually, retaining the true-prefix and true-walk assistance. The ON configurations retain the corresponding assistance, and the OFF configurations disable the named shortcut. The batch evaluates two phase-3 protocols on each shared phase-1 and phase-2 output. The joint-fit protocol fits on the union of fitting and scoring inputs and uses the reporting slice for early stopping. The split-fit protocol fits on $X_{\mathrm{test}}$, scores on $X_{\mathrm{test2}}$, uses those two slices for refinement, and reserves $X_{\mathrm{test3}}$ for final reporting. This separation concerns data use in phase 3, not the privileged accesses in extraction.

The tiniest fitting, scoring, and reporting sets have 2,000, 2,000, and 1,000 inputs. The corresponding sizes for tinier and tiny are 10,000, 5,000, and 5,000. The recorded split-fit phase-3 access logs contain no requests outside their phase-specific input contract. No identical input rows occur across these slices. This overlap check does not by itself establish statistical independence. The input guard also returns logits even though the inspected consumers reduce them to predicted labels. A strict label-only interface must restrict the return value itself.

We check the predictions directly from the numeric checkpoint parameters. Across 1,344 models, all 4,032 prediction vectors on the three input slices agree with the checkpoint evaluations, and all 576 stage-agreement values agree with the recomputed values. Table~\ref{tab:audited-clean} gives one D1 ON trace per victim under the split-fit protocol. This comparison uses gradient masking without row restoration.

\begin{table}[H]
\centering
\caption{D1 ON split-fit reconstruction. Accepted rows are counted
across all hidden layers. Agreement is measured on the reserved
reporting slice.}
\label{tab:audited-clean}

\setlength{\tabcolsep}{4pt}
\footnotesize
\resizebox{0.7\linewidth}{!}{%
\begin{tabular}{@{}lcccc@{}}
\toprule
Victim & Accepted rows & Output fit & Sign search & Refinement \\
\midrule
tiniest\_relu      & 20/32   & 99.60\%  & 99.90\%  & 99.90\%  \\
tiniest\_leakyrelu & 20/32   & 97.90\%  & 99.50\%  & 99.20\%  \\
tinier\_relu       & 28/56   & 100.00\% & 100.00\% & 100.00\% \\
tinier\_leakyrelu  & 31/56   & 100.00\% & 100.00\% & 100.00\% \\
tiny\_relu         & 142/256 & 100.00\% & 100.00\% & 100.00\% \\
tiny\_leakyrelu    & 231/256 & 100.00\% & 100.00\% & 100.00\% \\
\bottomrule
\end{tabular}%
}
\end{table}

The D1 comparison on tiny ReLU accepts 142/256 rows with assistance and 24/256 without it. For tiny Leaky ReLU, the corresponding counts are 231/256 and 88/256. These differences describe the assisted implementation under distinct search realizations. The split-fit D1 tiniest Leaky ReLU ON run has final agreement $0.992$, while its joint-fit counterpart has agreement $0.995$. Thus a claim that every D1 run reaches $0.995$ is false across the two protocols.

The four ON configurations yield four canonical repetitions per victim. For tiniest ReLU, their fc1--fc4 means are $8$, $7$, $1$, and $4.75$, and their sample standard deviations are $0$, $0$, $1.1547$, and $0.5$. These observations show that accepted counts need not decrease at every layer. These four repetitions are distinct from the 40-run experiment in Table~\ref{tab:depth-runs}. The unseeded dual-point searches also prevent a paired interpretation of ON--OFF differences based on a common recorded random stream. Across the batch, 2,063 of 3,002 decided rows have correct recovered-frame votes. The operational orientation conversion uses victim rows as described in Section~\ref{sec:sign-recovery}.

The exhaustive sign-block experiment enumerates 256 assignments in each of 27 blocks. It considers a fixed decoder fitted on the union pool, a fixed decoder fitted on the fitting slice alone, and a decoder refitted on that slice for every assignment. The initial assignment maximizes scoring agreement, allowing ties, in 24/27 blocks with either fixed decoder and in 3/27 with refitting. All 81 block/mode agreement records recompute exactly. In six selected blocks, replacing scoring and reporting labels with random labels leaves the stored decoder coefficients and candidate predictions unchanged in all 18 checked cases. This supports separation of candidate construction from scoring for those block procedures. Proposition~\ref{prop:sign-selection} additionally requires the stated sampling assumptions. The union-fitted decoder depends on scoring labels and does not meet that proposition's fixed-family condition. These block tests also do not establish global parameter identification or the optimization accuracy of the full adaptive search.

The split-fit phase-3 logs record 720 calls and 3,537,152 input evaluations across the 48 runs, ranging from 29,024 to 96,024 evaluations per run. Each repeated input is counted again. The harness passes the constant 20,000 as both its budget and its used-query count. That constant does not equal these recorded totals. We must distinguish pool size, distinct queried inputs, repeated evaluations, and privileged reads before making a comparison at equal attack cost. A phase-3 ledger also does not supply the entire cost of phases 1 and 2.

\subsection{Row restoration, victim-free walks, and seeded repetitions}
\label{sec:exp:seeded}
We evaluate row restoration, replacement sign walks, refinement protocols, and recovery depth. The restoration and walk comparisons use shared extraction outputs to compare downstream procedures. The refinement and depth experiments vary recorded dual-search seeds. We specify the protocol, sample size, and information access for each comparison.

\paragraph{Exact row restoration.}
We compare gradient masking with the restoration update in Eq.~\eqref{eq:row-projection}, under the joint-fit and split-fit protocols on 48 extraction runs. All 48 restoration runs per protocol have zero recorded change in the accepted row coordinates after every recorded optimizer step, with equality at all 672 checked checkpoints. The restoration reference is reset at each refinement call, so intentional sign changes between calls remain possible. For masking alone, the largest recorded coordinate change is approximately $1.54\times10^{-4}$ under the joint-fit protocol and $1.23\times10^{-4}$ under the split-fit protocol. Restoration changes final agreement by at most $0.004$ and $0.003$, respectively, in these paired comparisons. These small observed differences do not establish statistical equivalence. Proposition~\ref{prop:row-preservation} proves the preservation property for the specified update independently of its effect on agreement.

\paragraph{Replacing victim-weight access in the walk.}
We compare the canonical walk with two replacements on the extraction outputs from 24 runs with the corresponding shortcuts enabled. One replacement uses a reconstructed hidden stack with the victim output layer. The other uses reconstructed hidden layers and a label-fitted output layer. The latter is the victim-free walk variant. The replacement uses a guard that blocks victim-model loading and parameter access. In the guard tests, replacing victim-model files leaves the walk outputs unchanged. Its input artifacts still originate from assisted extraction, so it is not an end-to-end hard-label experiment. Fitting its provisional heads reuses a label pool of 2,000 or 10,000 inputs. One batch call to obtain that pool counts as 2,000 or 10,000 label requests.

Across decided rows, the canonical walk has 1,255 correct votes out of 1,819, or $68.99\%$. The victim-free replacement has 836 out of 1,810, or $46.19\%$. The replacement retaining the victim output layer has 876 out of 1,820, or $48.13\%$. These denominators differ, so the pooled percentages are not a paired comparison on identical decided rows. Every replacement walk terminates by exhausting its available dual points. The resulting comparisons include incomplete walks. Exhausting a point pool is not a certified hitting-distance comparison. After split-fit reconstruction with row restoration, the victim-free variant nevertheless has mean final agreement $0.99658$ across the 24 runs. Thus high functional agreement coexists with poor sign accuracy in this experiment. Proposition~\ref{prop:local-vote-stability} specifies a sufficient local stability condition, but this experiment does not establish the required slope, perturbation, or gap bounds.

\paragraph{Refinement and independent reporting.}
We repeat the two tiniest victims with D1 enabled or disabled and dual-search seeds 1--10, giving 40 extraction runs. Each is evaluated under four phase-3 protocols. The joint-fit protocol fits on $X_{\mathrm{test}}\cup X_{\mathrm{test2}}$ and stops on the 1,000-input stopping set. The split-fit protocol fits on $X_{\mathrm{test}}$, selects signs on $X_{\mathrm{test2}}$, and also stops on that scoring set. Thus it separates reporting from selection, but does not give refinement a separate stopping set. A third variant adds exact row restoration and includes the entering model among the checked checkpoints. A fourth also uses a separate stopping pool $X_{\mathrm{val}}$ of 2,000 inputs, of which 1,024 are labelled for the stopping rule. All four variants are scored on the same held-out pool of 5,000 inputs, disjoint from the fitting, scoring, and stopping pools and unused during fitting or selection.

\begin{table}[H]
\centering
\caption{Final agreement on 5,000 held-out inputs across 40 seeded tiniest runs. The extraction and canonical sign walk retain privileged access.}
\label{tab:seeded-agreement}
\small
\begin{tabular}{lccc}
\toprule
Phase-3 protocol & Minimum & Maximum & Runs below $0.995$ \\
\midrule
Joint-fit & 0.9918 & 0.9998 & 4/40 \\
Split-fit & 0.9890 & 0.9992 & 15/40 \\
Split-fit, restoration and initial checkpoint & 0.9910 & 0.9996 & 16/40 \\
Also separate stopping pool & 0.9910 & 0.9998 & 13/40 \\
\bottomrule
\end{tabular}
\end{table}

In the D1 ON tiniest Leaky ReLU split-fit run ending at $0.992$ on 1,000 reporting inputs, refinement increases the errors from five to eight. On its stopping set, errors rise from ten to eighteen because the entering model was not eligible for selection. Including that model prevents this deterioration on the stopping sample. Across the 80 refinement executions with initial-checkpoint selection, the returned stopping agreement is never below its entering value. It is higher in 62 executions and equal in 18. The held-out benefit is smaller and not uniform. The variant with restoration and a separate stopping pool improves agreement over split-fit by $0.000425$ on average, with 22 runs improving, 14 worsening, and four unchanged. This comparison changes restoration and checkpoint selection together, so it does not isolate their separate effects.

The $0.9890$ minimum in Table~\ref{tab:seeded-agreement} applies to the 40 seeded runs in that experiment. The D1 ON tiniest Leaky ReLU model with agreement $0.992$ on 1,000 inputs has agreement $0.9882$ on the 5,000-input pool. Thus the minimum in the table is specific to its experimental configuration and seed set. Likewise, the joint-fit protocol's higher average score on its own stopping sample does not by itself quantify the causal effect of selection. Its score on that sample is not an independent test estimate. Both protocols' scores on the held-out pool avoid this direct reuse.

\paragraph{Recovery depth across recorded seeds.}
We run the canonical configuration on six victims with seeds 1--8, giving 48 extraction runs and three phase-3 protocols per run. The 16 tiniest extractions repeat point-for-point the corresponding seed cases in the 40-run study and are not additional independent repetitions. For these repeated seed cases, the dual points, recovered rows, and signs have matching hashes, and the phase-3 results agree. Search rounds use recorded seeds derived from the experiment seed and round index. These repetitions use the same machine, library versions, and thread setting. Reproducibility across other configurations is not established. The runs use a CPU, float64 arithmetic, three dual workers, and two threads per process. All shortcuts remain enabled. This seeded experiment is restricted to the six synthetic victims.

\begin{table}[H]
\centering
\caption{Accepted rows across eight recorded dual-search seeds per synthetic victim. Parentheses give means.}
\label{tab:seeded-depth}
\small
\begin{tabular}{lcc}
\toprule
Victim & Total accepted rows & Deepest hidden layer \\
\midrule
Tiniest ReLU & 20--23 (22.125) & 4--5 (4.875) \\
Tiniest Leaky ReLU & 21--25 (22.875) & 2--5 (3.250) \\
Tinier ReLU & 29--31 (30.000) & 0 (0.000) \\
Tinier Leaky ReLU & 31--34 (32.875) & 0--1 (0.250) \\
Tiny ReLU & 131--141 (136.000) & 0 (0.000) \\
Tiny Leaky ReLU & 221--229 (225.875) & 51--57 (54.375) \\
\bottomrule
\end{tabular}
\end{table}

The largest architecture consistently shows more accepted directions under Leaky ReLU, including substantial coverage in its deepest layer. On tiniest, the deepest-layer ordering is reversed on average. The single-run Leaky ReLU advantage in Table~\ref{tab:permodel} therefore does not imply a uniform advantage across layers or seeds. These are comparisons of fixed trained victims. They do not isolate activation choice from differences in training or learned parameters. The eight seeded runs per victim are distinct from the 40-run experiment in Table~\ref{tab:depth-runs}. All 32 tinier and tiny runs have final agreement 1 on their 5,000 reporting inputs under each protocol. This is agreement on those finite samples, not equality of the classifiers on every input.

\paragraph{Measured access to the victim.}
We classify each victim access by its returned information. Label-only responses count as hard-label requests. Margins, gradients, hidden activations, and parameters count as privileged access. Unclassified accesses are assigned to the privileged category. Across the 48 runs, the phase-1 counter matches the separate dual-search ledger for labels, detection margins, Newton margins, and gradients. For these seeded runs, the phase-3 guard encodes only the predicted label as a one-hot vector, with a single entry equal to one and all other entries zero. The counter checks find no untyped accesses or unassigned search rows. These consistency checks concern the implemented access paths. They do not establish that an arbitrary implementation has no other access path.

Phase 1 requests between 704,964 and 12,027,750 hard labels per run. Phase 3 makes 15 batched label calls per protocol execution. Nine calls score sign-search stages, two fit output layers, two support short refinements, one supports final refinement, and one obtains stopping labels. Under the joint-fit protocol, this totals 57,000 requests on tiniest and 211,024 on tinier or tiny. Its distinct phase-3 labelled-input counts are 5,000 and 16,024. Under the split-fit protocols, the corresponding repeated-request totals are 29,024 and 96,024. The canonical sign walk makes no label queries, but reads victim weights and uses them in its simulated evaluations. Zero label queries at that stage is therefore not evidence of zero privileged access.

We do not combine heterogeneous accesses into a hard-label query budget. Detection-margin and gradient channels record input rows, while parameter channels count scalar entries read, often repeatedly. Further channels include clustering activations, true prefix parameters and prefix evaluations, acceptance and scaling parameters, and walk orientation rows. A subtotal containing label requests and the D1 and D2 channels excludes those additional accesses and cannot represent the total victim-access cost. The distillation baseline is also assisted in this sweep, starting from victim biases and the victim output layer. A comparison with independent label-only distillation requires a baseline without those initial parameters. A cost comparison must specify whether it counts batched calls, input-level requests, distinct inputs, or parameter entries.

\section{Conclusion}
\label{sec:conclusion}
We combined assisted weight-direction extraction with bias estimation, sign search, output-layer fitting, and gradient-masked refinement. We proved conditional bounds for direction estimation with approximate prefixes, preservation of local hitting-distance orderings, prediction changes from aligned parameter errors, and sign selection with independent scoring data. We also specified a projected update that enforces exact row preservation. The seeded experiments show substantially more accepted directions under Leaky ReLU on the largest synthetic architecture, but no uniform deepest-layer advantage on the smallest. The reconstruction procedure produces a complete classifier, but the privileged accesses used by extraction prevent these experiments from establishing an end-to-end attack using labels alone.

The synthetic-data comparisons do not show a uniform advantage over distillation. The two CIFAR comparisons favor assisted reconstruction on all three reported functional metrics, although agreement remains below $60\%$. These results support studying reconstruction from partial directions. They do not establish exact parameter recovery or uniformly reliable sign recovery.

A strict-oracle evaluation must replace the true prefixes and ground-truth acceptance checks, account for queries per input, and reserve a final test slice that is not used for early stopping. The gradient-masked update changes row magnitudes. The restoration variant preserves the accepted rows at every recorded update and checked checkpoint, consistent with Eq.~\eqref{eq:row-projection}. Including the entering checkpoint prevents deterioration on the stopping sample, while the seeded results show that held-out agreement may still decrease. The victim-free walk replacement removes the operational victim-weight access in the guard tests, but gives less accurate signs and still receives assisted extraction artifacts. High agreement after reconstruction therefore does not resolve the sign-recovery problem. The local hitting-distance stability bound identifies the perturbation and gap conditions needed to preserve a prescribed distance vote, without assuming that the vote is correct. Further work can compare the label-trained output layer with recovery from decision-boundary constraints~\cite{canalesmartinez2025somelayers} and determine when the local rank conditions suffice for reliable direction estimation.

\backmatter

\bmhead{Data availability} The experimental data will be made available at~\cite{experimentaldata}. The implementation and experimental scripts are available at~\cite{ourimp}.

\appendix
\section{Mathematical Results and Implementation}
\label{app:provenance}
We give the correspondence between the mathematical results and the implementation~\cite{ourimp}. The propositions state sufficient conditions for their conclusions. The numerical experiments evaluate the implemented procedures under the configurations in Section~\ref{sec:exp}.

The signature-recovery routines \texttt{intersect}, \protect\path{is_consistent}, and \protect\path{extract_weights} in \protect\path{signature_recovery/recover_weights.py} implement the tangent-space calculations of Proposition~\ref{prop:dual-recovery}. The routines \protect\path{forward_around} and \protect\path{relu_around} transport those constraints through the prefix. Proposition~\ref{prop:approximate-prefix} bounds the effect of errors in that transport. Section~\ref{sec:leaky:algebra} specifies the Leaky ReLU slopes and the dimensions of the local maps.

The sign walk in \protect\path{sign_recovery.py} compares distances on the two sides of a critical hyperplane. Equation~\eqref{eq:sign-norm} describes the local velocity asymmetry under matched controls. Proposition~\ref{prop:local-vote-stability} gives conditions under which an approximate affine calculation preserves a distance ordering. Lemma~\ref{lem:sign-vote} bounds majority-vote error when the individual votes are independent and have a positive correctness advantage. These are separate conditions. Stability of a distance ordering does not imply correctness of its sign interpretation.

The routine \protect\path{recover_biases_from_duals} in \protect\path{bias_recovery.py} uses Eq.~\eqref{eq:bias-eq}. The routines \protect\path{_score_objective} and \protect\path{_flip_neuron} in \protect\path{sign_search.py} evaluate candidates and jointly change a row and its bias. Proposition~\ref{prop:sign-selection} applies when the candidate family is fixed independently of the scoring sample. The routine \protect\path{recover_output_layer} in \protect\path{output_layer_recovery.py} fits the decoder to labels. Proposition~\ref{prop:network-error} bounds prediction changes from aligned parameter errors.

Proposition~\ref{prop:row-preservation} describes the masked AdamW update in \texttt{refinement.py}. Equation~\eqref{eq:row-projection} specifies exact row restoration. The checkpoint-selection rule in Section~\ref{sec:mlrecon:frozen} retains the entering model as a candidate and compares checkpoints on a fixed stopping sample. Its deterministic guarantee concerns that sample, while held-out agreement is evaluated separately.

\end{document}